\newcommand{\stkout}[1]{\ifmmode\text{\sout{\ensuremath{#1}}}\else\sout{#1}\fi}
\newcommand{\EEb}[2]{{\mathbb E}_{#1}\left[#2\right] }
\providecommand{\cc}{\mathbf{c}}
\providecommand{\dd}{\mathbf{d}}
\providecommand{\mm}{\mathbf{m}}
\providecommand{\pp}{\mathbf{p}}
\providecommand{\sss}{\mathbf{s}}
\providecommand{\vv}{\mathbf{v}}
\providecommand{\xx}{\mathbf{x}}
\providecommand{\zz}{\mathbf{z}}
\providecommand{\cC}{\mathcal{C}}
\providecommand{\cE}{\mathcal{E}}
\providecommand{\cP}{\mathcal{P}}
\providecommand{\cS}{\mathcal{S}}
\providecommand{\cV}{\mathcal{V}}
\providecommand{\cX}{\mathcal{X}}
\providecommand{\cZ}{\mathcal{Z}}
\providecommand{\R}{\mathbb{R}} % Reals
\providecommand{\mE}{\mathbf{E}}
\providecommand{\mG}{\mathbf{G}}
\providecommand{\mV}{\mathbf{V}}
\providecommand{\mX}{\mathbf{X}}
\providecommand{\mZ}{\mathbf{Z}}
\providecommand{\ind}{\perp\!\!\!\!\perp}
\newcommand{\norm}[1]{\left\lVert#1\right\rVert}
\newcommand{\mycaptionof}[2]{\captionof{#1}{#2}}
    \newcommand*{\algrule}[1][\algorithmicindent]{\makebox[#1][l]{\hspace*{.5em}\thealgruleextra\vrule height \thealgruleheight depth \thealgruledepth}}%
\newcommand*{\thealgruleextra}{}
\newcommand*{\thealgruleheight}{.75\baselineskip}
\newcommand*{\thealgruledepth}{.25\baselineskip}
\def\ALG@printindent{%
    \ifnum \theALG@nested>0% is there anything to print
        \ifx\ALG@text\ALG@x@notext% is this an end group without any text?
            % do nothing
        \else
            \unskip
            \addvspace{-1pt}% FUDGE to make the rules line up
            % draw a rule for each indent level
            \ALG@printindent@tempcnta=1
            \loop
                \algrule[\csname ALG@ind@\the\ALG@printindent@tempcnta\endcsname]%
                \advance \ALG@printindent@tempcnta 1
            \ifnum \ALG@printindent@tempcnta<\numexpr\theALG@nested+1\relax% can't do <=, so add one to RHS and use < instead
            \repeat
        \fi
    \fi
    }%
\patchcmd{\ALG@doentity}{\noindent\hskip\ALG@tlm}{\ALG@printindent}{}{\errmessage{failed to patch}}
\newbox\statebox
\newcommand{\myState}[1]{%
    \setbox\statebox=\vbox{#1}%
    \edef\thealgruleheight{\dimexpr \the\ht\statebox+1pt\relax}%
    \edef\thealgruledepth{\dimexpr \the\dp\statebox+1pt\relax}%
    \ifdim\thealgruleheight<.75\baselineskip
        \def\thealgruleheight{\dimexpr .75\baselineskip+1pt\relax}%
    \fi
    \ifdim\thealgruledepth<.25\baselineskip
        \def\thealgruledepth{\dimexpr .25\baselineskip+1pt\relax}%
    \fi
    %\showboxdepth=100
    %\showboxbreadth=100
    %\showbox\statebox
    \State #1%
    %\State \usebox\statebox
    %\State \unvbox\statebox
    %reset in case the next command is not wrapped in \myState
    \def\thealgruleheight{\dimexpr .75\baselineskip+1pt\relax}%
    \def\thealgruledepth{\dimexpr .25\baselineskip+1pt\relax}%
}
\theoremstyle{plain}
\newtheorem{theorem}{Theorem}
\newtheorem{assumption}{Assumption}
\newcommand{\tablestyle}[2]{\setlength{\tabcolsep}{#1}\renewcommand{\arraystretch}{#2}\centering\footnotesize}
\newlength\savewidth\newcommand\shline{\noalign{\global\savewidth\arrayrulewidth
  \global\arrayrulewidth 1pt}\hline\noalign{\global\arrayrulewidth\savewidth}}
\crefname{section}{Sec.}{Secs.}
\Crefname{section}{Section}{Sections}
\Crefname{table}{Table}{Tables}
\crefname{table}{Tab.}{Tabs.}
\newcommand{\lacl}{\color{brown}}
\newcommand{\blue}{\color{blue}}
\begin{document}

%%%%%%%%% TITLE - PLEASE UPDATE
\title{Understanding Masked Autoencoders via Hierarchical Latent Variable Models}

% \author{Lingjing Kong\\
% Carnegie Mellon University\\
% % For a paper whose authors are all at the same institution,
% % omit the following lines up until the closing ``}''.
% % Additional authors and addresses can be added with ``\and'',
% % just like the second author.
% % To save space, use either the email address or home page, not both
% \and
% Second Author\\
% Institution2\\
% First line of institution2 address\\
% {\tt\small secondauthor@i2.org}
% }

% \thanks{Joint first author. $^\dagger$Joint senior author. 
\author{
Lingjing Kong\thanks{Joint first author. $\dagger$ Joint senior author. }\, $^1$
\quad
Martin Q. Ma$^{* 1}$
\quad
Guangyi Chen$^{1, 2}$
\\
Eric P. Xing$^{1, 2}$
\quad
Yuejie Chi$^1$
\quad
Louis-Philippe Morency$^{\dagger 1}$
\quad
Kun Zhang$^{\dagger 1, 2}$
\\
\vspace{2mm}
{$^1$Carnegie Mellon University \quad $^2$Mohamed bin Zayed University of Artificial Intelligence}
}
\maketitle

%%%%%%%%% ABSTRACT
\begin{abstract}
Masked autoencoder (MAE), a simple and effective self-supervised learning framework based on the reconstruction of masked image regions, has recently achieved prominent success in a variety of vision tasks. 
% With , it is largely unknown how to understand MAE
Despite the emergence of intriguing empirical observations on MAE, a theoretically principled understanding is still lacking. 
% Despite the simplicity of the design intuition, a formal
In this work, we formally characterize and justify existing empirical insights and provide theoretical guarantees of MAE.
We formulate the underlying data-generating process as a hierarchical latent variable model and show that under reasonable assumptions, MAE provably identifies a set of latent variables in the hierarchical model, explaining why MAE can extract high-level information from pixels.
Further, we show how key hyperparameters in MAE (the masking ratio and the patch size) determine which true latent variables to be recovered, therefore influencing the level of semantic information in the representation. 
Specifically, extremely large or small masking ratios inevitably lead to low-level representations.
Our theory offers coherent explanations of existing empirical observations and provides insights for potential empirical improvements and fundamental limitations of the masking-reconstruction paradigm.
We conduct extensive experiments to validate our theoretical insights. %, showing that high but non-extreme masking ratios or patch sizes often capture succinct and robust high-level information, while extreme masking ratios captures more low-level information.

\end{abstract}

\vspace{-2em}
\section{Introduction}

Self-supervised learning (SSL) has achieved tremendous success in learning transferable representations without labels, showing strong results in a variety of downstream tasks \cite{devlin2018bert, chen2020simple, he2020momentum, chen2021empirical,radford2021learning}. 
As a major SSL paradigm, masked image modeling (MIM)~\cite{baevski2022data2vec,anandkumar2013learning,chen2020generative,xie2022simmim,he2021masked,zhou2021ibot,bao2021beit,chen2022context,li2021mst} performs the reconstruction of purposely masked image pixels as the pretraining task.  
Among MIM methods, masked autoencoding (MAE)~\cite{he2021masked} has gained significant traction due to its computational efficiency and state-of-the-art performance in a wide range of downstream tasks.

% Second paragraph Lingjing: how MAE works, observations of MAE, open questions. Not principally connected. Similar to Contrastive Learning Inverts DGP second paragraph. 

% Phenomena:
% \begin{itemize}
%     \item poor linear separability \cite{he2021masked};
%     \item the impact of masking \cite{he2021masked,hu2022exploring,xie2022simmim,wu2022extreme};
%     \item Low masking ratios \cite{li2021mst} and large masking ratios \cite{wu2022extreme};
% \end{itemize}

Empirical observations from previous work reveal various intriguing properties of MAE.
% For instance, although achieving comparable or superior downstream task performances after finetuning, MAE is known to be inferior to contrastive learning~\cite{chen2020simple,he2020momentum,chen2021empirical,caron2021emerging,Caron2020UnsupervisedLO,bardes2021vicreg,zbontar2021barlow} on the linear-probing performance.
In particular, aggressive masking has been shown critical to downstream task performances~\cite{he2021masked,xie2022simmim,wu2022extreme,hu2022exploring}. 
It is conjectured that such masking forces the model to learn meaningful \textit{high-level} semantic understanding of the objects and scenes rather than the \textit{low-level} information such as texture. 
However, it remains largely unclear whether such intuitions are sound in principle.
Theoretically verifying and characterizing these empirical insights would not only grant a certificate to the current approaches but would also offer theoretical insights for algorithmic advancements.
% how to characterize the impact of masking on the learned representation and why extremely large masking ratios lead to degraded downstream performances in practice.
% Understanding such intuition rigorously is not only a guarantee for the current approach, but also provides insights for principled empirical performances.
% However, whether such intuition is theoretically justifiable remains elusive. 

\begin{figure}[t]
    \centering
    \includegraphics[width=0.9\columnwidth]{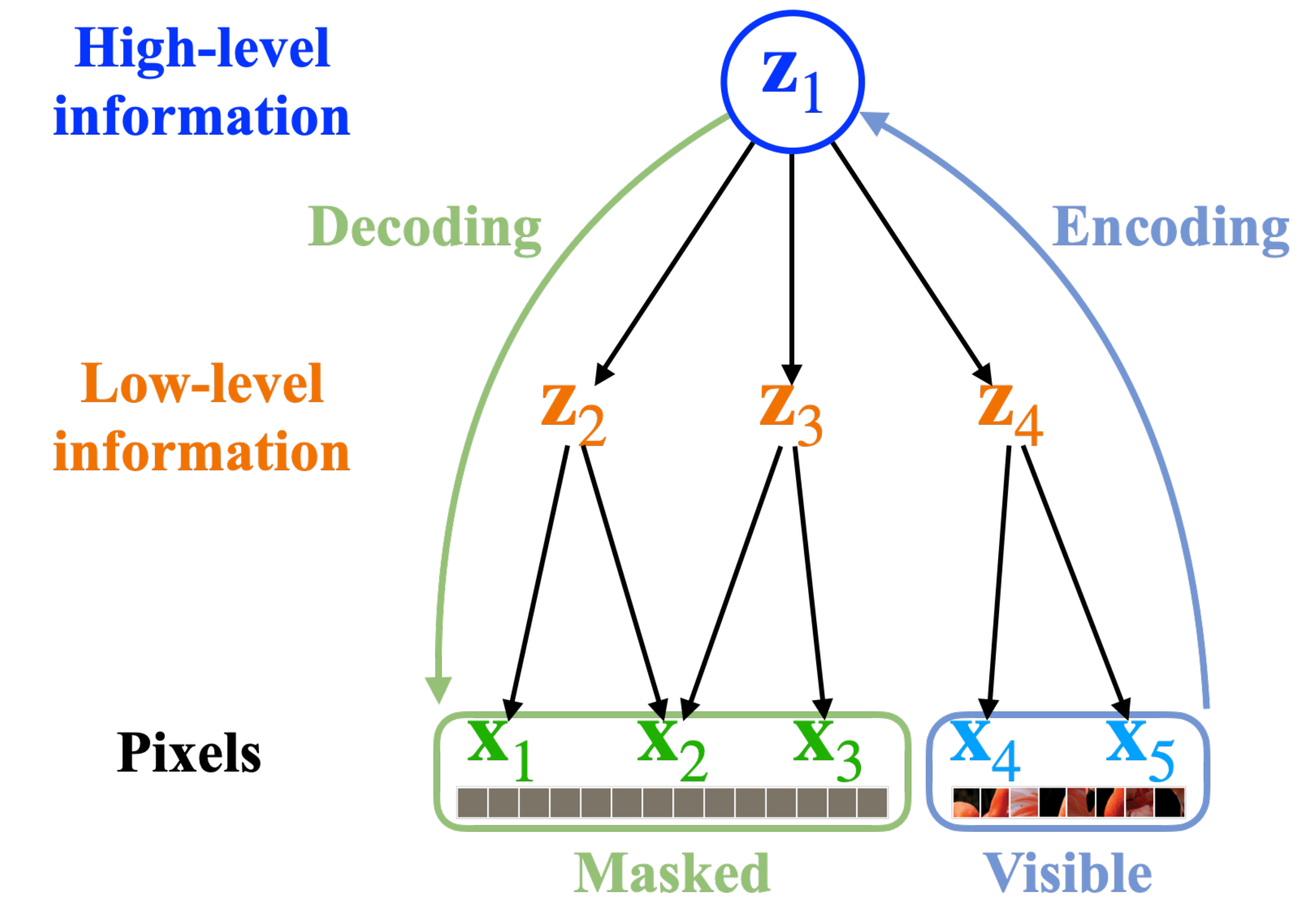}
    \caption{\footnotesize
    \textbf{Masking-reconstruction under a hierarchical generating process.}
    In a hierarchical data-generating process, high-level latent variables (e.g., \ $\zz_{1}$) represent high-level information such as semantics, and low-level latent variables (e.g., \ $[\zz_{2}, \zz_{3}, \zz_{4}]$) represent low-level information such as texture.
    We show that through proper masking, MAE learns to recover high-level latent variables with identifiability guarantees. 
    }
    \label{fig:intuition}
    \vspace{-2em}
\end{figure}

% The lack of such fundamental understanding significantly inhibits further algorithmic advancement.

In this work, we establish a principled yet intuitive framework for understanding MAE and providing identifiability guarantees.
% Our theoretical insights reveal how the critical components in MAE like masking ratios and patch sizes influence the learned representation.
Concretely, we first formulate the underlying data-generating process as a hierarchical latent variable model (Figure~\ref{fig:intuition}), with high-level variables corresponding to abstract and semantic information like classes, and low-level variables corresponding to elaborate and granular information like texture.
Such latent variable models have been studied in causal discovery~\cite{xie2022identification,huang2022latent}. In \cite{sabour2017dynamic,hinton2021represent}, it is hypothesized that complex data, such as images, follow a hierarchical latent structure.
% In this work, we make the connection between MAE and the hierarchical latent variable model.

Stemming from this formulation, we show that under reasonable assumptions, MAE can recover a subset of the true latent variables within the hierarchy, where the levels of the learned latent variables are explicitly determined by how masking is performed.
Our theoretical framework not only unifies existing empirical observations coherently but also gives rise to insights for potential empirical improvements and fundamental limitations of MAE.
Our theory improves the existing nonlinear identifiability results~\cite{von2021self,lyu2021latent} and can be of independent interest. 

% Explicitly mention the differences between this work and previous theoretical work (Our work is the first to use LHM to understanding MAE learning process and provide identifiability guarantee in MAE). discuss the connection between our theory and observations made in existing work

Empirically, we deduce several insights from our theoretical results and verify them with experiments.
%We find that depending on the nature of the tasks, different masking hyperparameters may be preferred and thus caution the use of a fixed set of masking hyperparameters validated on downstream tasks of a different nature.
Unlike common belief, MAE trained with extremely high masking ratios (e.g., $90\%$) captures low-level information, similar to models trained with extremely low ratios (e.g., $10\%$).
Our results suggest that learning high-level semantic information is only possible in the non-extreme masking regime. 
% In contrast, a suitable (e.g., $75\%$) masking ratio captures strong and succinct high-level semantic information and yields robust and linearly separable features. 
We also discuss masking designs that can potentially improve current empirical performance.

\paragraph{Contributions.} We highlight the following contributions:
\begin{itemize}
    \item We formulate the underlying data-generating process as a hierarchical latent variable model. Under such a formulation, we provide a theoretical guarantee for MAE by showing that it can recover true latent variables in the hierarchical model. 
    \item Based on our theoretical results, we establish the connection between masking hyperparameters (i.e., masking ratios and patch sizes) and the learned representation and discuss potential improvements and inherent limitations of MAE.
    \item We validate our theoretical insights with extensive experimental results. We illustrate how the semantic level of the learned representation varies with the aggressiveness of the masking strategy. Interestingly, representations learned under overly aggressive masking (e.g.,ß 90\% masking ratio) exhibit similar properties to their counterparts learned with overly conservative masking (e.g., 10\% masking ratio).  
\end{itemize}

\section{Theoretical Understanding}
\label{sec:theory}

\subsection{A Hierarchical Data-generating Process}
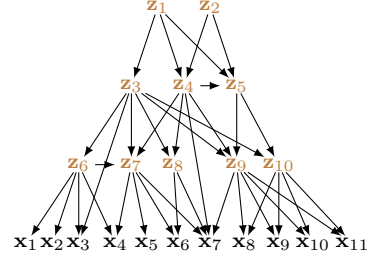
\begin{figure}[htp!]%\vspace{2mm}
\centering
	\begin{tikzpicture}[scale=.7, line width=0.4pt, inner sep=0.1mm, shorten >=.1pt, shorten <=.1pt]
		\draw (-1.5, 2.5) node(L1)  {{\footnotesize\lacl\,{$\zz_1$}\,}};
		\draw (-0.5, 2.5) node(L2)  {{\footnotesize\lacl\,{$\zz_2$}\,}};
		\draw (-2, 1) node(L3) {{\footnotesize\lacl\,{$\zz_3$}\,}};
		\draw (-1, 1) node(L4)  {{\footnotesize\lacl\,$\zz_4$\,}};
		\draw (0, 1) node(L5)  {{\footnotesize\lacl\,$\zz_5$\,}};
% 		\draw (2.8, 1) node(L5)  {{\footnotesize\lacl\,$L_5$\,}};
		\draw (-3, -0.5) node(L6)  {{\footnotesize\lacl\,$\zz_6$\,}};
		\draw (-2, -0.5) node(L7)  {{\footnotesize\lacl\,$\zz_7$\,}};
		\draw (-1.2, -0.5) node(L8)  {{\footnotesize\lacl\,$\zz_8$\,}};
		\draw (0, -0.5) node(L9)  {{\footnotesize\lacl\,$\zz_9$\,}};
		\draw (0.8, -0.5) node(L10)  {{\footnotesize\lacl\,$\zz_{10}$\,}};
% 		\draw (1.75, -0.5) node(X12)  {{\footnotesize\,$X_{12}$\,}};
% 		\draw (2.5, -0.5) node(X13)  {{\footnotesize\,$X_{13}$\,}};
% 		\draw (3.25, -0.5) node(X14)  {{\footnotesize\,$X_{14}$\,}};
% 		\draw (4.05, -0.5) node(X15)  {{\footnotesize\,$X_{15}$\,}};
% 		\draw (4.8, -0.5) node(X16)  {{\footnotesize\,$X_{16}$\,}};
		\draw (-4, -2) node(X1)  {{\footnotesize\,$\xx_1$\,}};
		\draw (-3.5, -2) node(X2)  {{\footnotesize\,$\xx_2$\,}};
		\draw (-3, -2) node(X3)  {{\footnotesize\,$\xx_3$\,}};
		\draw (-2.3, -2) node(X4)  {{\footnotesize\,$\xx_4$\,}};
		\draw (-1.7, -2) node(X5)  {{\footnotesize\,$\xx_5$\,}};
		\draw (-1.1, -2) node(X6)  {{\footnotesize\,$\xx_6$\,}};
		\draw (-0.5, -2) node(X7)  {{\footnotesize\,$\xx_7$\,}};
		\draw (0.15, -2) node(X8)  {{\footnotesize\,$\xx_8$\,}};
		\draw (0.8, -2) node(X9)  {{\footnotesize\,$\xx_9$\,}};
		\draw (1.45, -2) node(X10)  {{\footnotesize\,$\xx_{10}$\,}};
		\draw (2.2, -2) node(X11)  {{\footnotesize\,$\xx_{11}$\,}};
	   \draw[-latex] (L1) -- (L3);
	   \draw[-latex] (L1) -- (L4);
	   \draw[-latex] (L1) -- (L5);
	   \draw[-latex] (L2) -- (L4);
	   \draw[-latex] (L2) -- (L5);
	   \draw[-latex] (L3) -- (L6);
	   \draw[-latex] (L3) -- (L7);
	   \draw[-latex] (L3) -- (L8);
	   \draw[-latex] (L3) -- (L9);
	   \draw[-latex] (L3) -- (L10);
	   \draw[-latex] (L3) -- (X3);
	   
	   \draw[-latex] (L4) -- (L5);
	   \draw[-latex] (L4) -- (L7);
	   \draw[-latex] (L4) -- (L8);
	   \draw[-latex] (L4) -- (L9);
	   \draw[-latex] (L4) -- (X7);
	   %\draw[-latex] (L4) -- (X12);
	   %\draw[-latex] (L4) -- (X13);
	   %\draw[-latex] (L4) -- (X14);
	   %\draw[-latex] (L4) -- (X15);
	   %\draw[-latex] (L4) -- (X16);
	   %\draw[-latex] (L5) -- (X12);
	   %\draw[-latex] (L5) -- (X13);
	   %\draw[-latex] (L5) -- (X14);
	   %\draw[-latex] (L5) -- (X15);
	   %\draw[-latex] (L5) -- (X16);
	   \draw[-latex] (L5) -- (L9);
	   \draw[-latex] (L5) -- (L10);
	   \draw[-latex] (L6) -- (X1);
	   \draw[-latex] (L6) -- (X2);
	   \draw[-latex] (L6) -- (X3);
	   \draw[-latex] (L6) -- (X4);
	   \draw[-latex] (L6) -- (L7);
	   \draw[-latex] (L7) -- (X4);
	   \draw[-latex] (L7) -- (X5);
	   \draw[-latex] (L7) -- (X6);
	   \draw[-latex] (L7) -- (X7);
	   \draw[-latex] (L8) -- (X6);
	   \draw[-latex] (L8) -- (X7);
	   \draw[-latex] (L9) -- (X7);
	   \draw[-latex] (L9) -- (X8);
	   \draw[-latex] (L9) -- (X9);
	   \draw[-latex] (L9) -- (X10);
	   \draw[-latex] (L9) -- (X11);
	   \draw[-latex] (L10) -- (X8);
	   \draw[-latex] (L10) -- (X9);
	   \draw[-latex] (L10) -- (X10);
	   \draw[-latex] (L10) -- (X11);
	   \label{fig:hierarchical-a}
	\end{tikzpicture}
\vspace{-.4em}
\caption{\textbf{A hierarchical data-generating process.} $\zz$ represents the latent variables and $\xx$ stands for the observable variables (i.e.\ image pixels). The hierarchical model is generic and is capable of modeling arbitrary DAGs in the latent space.}
\vspace{-1.5em}

\label{fig:hierarchical}
\end{figure}

% motivate the hierarchical structure: often the latent variable models are assumed to have direct observables, but for complex data like image, this may not be true, and more clear insights can be derived from a more explicitly characterized models --- reference to causal discovery works and hinton's paper works from causal community --- Kun's work
Images, despite their high dimensionality, are well structured -- there is a multitude of statistical dependencies among pixels determined by their relative distances and visual semantics. 
For instance, pixels in close proximity are often highly dependent, whereas pixels far apart typically share less information. 
There has been a plethora of work adopting this intuition for vision tasks such as image generation~\cite{vahdat2020nvae,maaloe2019biva,zhao2017learning}.
Similar insights are also addressed in attempts to learn a part-whole image representation~\cite{sabour2017dynamic,hinton2021represent}.

In this work, we formulate such an underlying structure of images with a hierarchical data-generating process~\cite{anandkumar2013learning,xie2022identification,huang2022latent} (Figure~\ref{fig:hierarchical}).
% Different from prior works that focus on linear cases, we consider the nonlinear as this th
% We show that this latent structure underpins the success of masking-reconstruction approaches like MAE.
Under this formulation, we reveal the underpinning principle of MAE and provide identifiability guarantees. In particular, we show that through masking-reconstruction, MAE learns the long-range statistical dependencies within the image, which renders it capable of extracting high-level semantic representations.

Formally, the generating process is defined with a graph structure $\mG: = ( \mV, \mE )$ where $\mE$ is the set of all directed edges and $ \mV := (\mX, \mZ)$ comprises all observable variables $\mX := \{\xx_{1}, \dots, \xx_{m} \}$ (i.e.,\ all pixels) and all latent variables $ \mZ:= \{\zz_{1}, \dots, \zz_{n} \}$. 
Each variable $\xx_{i}$ or $\zz_{j}$ represents a multidimensional vector. 
\footnote{
    In high-dimensional data like images, there is a larger degree of information redundancy, e.g., neighboring pixels. Thus, it is sensible to lump one-dimensional variables into vectors. 
}
The hierarchical latent structure $\mG$ fulfills the following assumption:
\begin{assumption} \label{assumption:data_generating_process}
(Data-generating process):  There is no direct edge between any two observables: $ \forall \xx_{i}, \xx_{j} \in \mX$ , $ (\xx_{i}, \xx_{j}) \notin \mE $ and $ (\xx_{j}, \xx_{i}) \notin \mE $. Each variable is generated by its parents in a directed acyclic graph (DAG) according to:
    \begin{align} \label{eq:sem}
    \begin{split}
        \zz_{i} &= g_{\zz_{i}} ( \text{Pa}(\zz_{i}),
        \bm\varepsilon_{i}), \\
         \xx_{j} &= g_{\xx_{j}} ( \text{Pa}(\xx_{j}),
        \bm\varepsilon_{j}),
    \end{split}
    \end{align}
    where $ g_{\zz_{i}} $ and $g_{\xx_{j}}$ are invertible functions, $\bm\varepsilon_{i}$ denotes exogenous random variables, and $\text{Pa}(\cdot)$ denotes the parents of a certain node. 
\end{assumption}

The invertible data-generating-module assumption ($g_{i} $ and $g_{j}$ being invertible) is adopted from prior work identifying latent variables in deep generative models \cite{locatello2020weakly, von2021self}. We make the following remarks on the hierarchical generating process. % \paragraph{Relations with generic DAGs.}
First, we note that we impose minimal constraints on the graph structure among the latent variables (i.e., the connectivity among latent variables $\zz$); therefore, the hierarchical model class is generic and encompasses all possible DAG structures over latent variables (Figure~\ref{fig:hierarchical}). %\paragraph{Interpretation of the generating process (Equation~\ref{eq:sem}).} 
Next, we interpret the latent variables $\zz$ as information related to semantic/content information, such as the shape and contour in the image, whereas the exogenous variables $\bm\varepsilon$ injected in each layer represent nuanced information, such as the texture and contrast of the image. 
Each structural function $g_{i}$ mixes the two sources of information and generates a more low-level variable until pixels $\xx$.
Lastly, for the upcoming theoretical results, as long as the data-generating process conforms to the hierarchical graph assumption, our theory holds, and the insights do not rely on the knowledge of a specific graph structure.

% \paragraph{Intuition of the absence of edges among $\xx$:} The only condition in place is the absence of edges among the observable variables $\xx$. This assumption is sensible in the case of the image data where it is generally believed there is no causal influence on the pixel level~\cite{}.
% We note that a weaker assumption is possible for the development of our theory but will obfuscate the readers and do not add to the provided insights. Thus, we choose to make this assumption.

\subsection{Masked Autoencoders}
As a canonical method of masking-reconstruction learning, MAE~\cite{he2021masked} randomly masks a subset of pixel patches in the original image and then reconstructs the masked patches from the encoded representation of the visible part. 
More formally, we formulate the MAE training as follows.

\textbf{Mask sampling}: random masks $\mm$ are sampled from a distribution $p_{\mm}$ which is parameterized by the masking ratio $ r $ (i.e., the ratio between the number of masked pixels and the number of all pixels) and patch size $s$ (i.e., the size of the minimal masking unit).

\textbf{MAE encoding}: $E_{\mm^{c}} (\xx_{\mm^{c}})$ maps the unmasked part $ \xx_{\mm^{c}} $ to a latent representation $ \hat{\cc} $ \footnote{
        To avoid notation cluttering, we adopt $ \hat{\cdot} $ to distinguish the estimated variables from the true ones in the generating process.
    }, where $ \mm^{c} $ denotes the complement of the mask index set $\mm$ and is passed to the encoder as positional embeddings to indicate the positions of the visible patches.
    
\textbf{MAE decoding}: $D_{\mm}(\hat{\cc}, \hat{\sss}_{\mm})$ reconstructs the masked image $\xx_{\mm}$ from the estimated latent variable $ \hat{\cc} $ (i.e., the encoder output), and the auxiliary information $\hat{\sss}_{\mm}$ embodying positional embeddings and \texttt{[MASK]} token which are fed to the decoder in MAE. 
    Although $\hat{\sss}_{\mm}$ is deterministic in MAE implementation, we view it as a random variable in our analysis.

With the notation above, the MAE training objective can be expressed as follows:
{
    \small
    \begin{align} \label{eq:mae_obj}
    L ( E, D ) := \EEb{\mm, \xx, \hat{\sss}_{\mm}}{
        \norm{ D_{\mm} \left( E_{\mm^{c}} (\xx_{\mm^{c}}), \hat{\sss}_{\mm} \right) - \xx_{\mm} }^{2}
    }. 
    \end{align}
}

% \begin{assumption}~\label{assumption:invertibility_mae}
%     Invertibility: $ f_{\mm^{c}}(\cdot) $ is invertible in $\xx_{\bar{\mm}}$ and $ g_{\mm}(\cdot) $ is invertible in $ \zz_{\mm} $
% \end{assumption}

% \begin{assumption}~\label{assumption:bottleneck_distribution}
%     Bottleneck-layer $ l_{\text{B}} $: a layer that d-separates the two partitions $ \xx_{\mm} \perp \xx_{\bar{\mm}} | l_{\text{B}} $. 
%     We assume that $ l_{ \text{B} } \sim \cP_{ l_{ \text{B} } } ( r, s ) $ and the discrete distribution $ \cP_{\mm} $ is parameterized by $ (r, s) $.
% \end{assumption}

\begin{figure} 
    \centering
    \tikz{
        % nodes
        \node[obs] (x1) {$\xx_{\mm}$};%
        \node[obs,right=1cm of x1] (x2) {$\xx_{\mm^{c}}$};%
        \node[latent,above=1cm of x1,xshift=-1.5cm] (s1) {$\sss_{\mm}$}; %
        \node[latent,above=1cm of x2,xshift=1.5cm] (s2) {$\sss_{\mm^{c}}$}; 
        \node[latent,above=1cm of x1,xshift=1cm] (c) {$\cc$}; %
        % \node[latent,above=2.2cm of x1,xshift=0.5cm] (ch) {$\cc'$}; %
        % \node[latent,above=2.2cm of x1,xshift=2cm] (sh) {$\sss'$}; %
        %  \draw (0,1.69) circle(.36cm);
        % plate
        %  \plate [inner sep=.25cm,yshift=.2cm] {plate1} {(x)(y)(z)} {$N$}; %
        % edges
        % \edge[-,dashed] {c} {s1}
        \edge[-,dashed] {c} {s2}
        \edge {s1} {x1}
        \edge {s2} {x2}
        \edge {c} {x1,x2} 
        % \edge {ch,sh} {c} 
}
    \caption{\textbf{Information sharing latent models}. Here, $\xx_{\mm}$ and $ \xx_{\mm^{c}} $ denote the masked part and the visible part of the image $\xx$, respectively. $\cc$ stands for the maximally shared information between $\xx_{\mm}$  and $ \xx_{\mm^{c}} $. $\sss_{\mm}$ and $ \sss_{\mm^{c}} $ refer to the information specific to $\xx_{\mm}$  and $ \xx_{\mm^{c}} $ respectively. The dashed line indicates the potential existence of statistical dependence.}
    \label{fig:cs_model}
\end{figure}

\subsection{Identifiability Theory}

Building upon the formalization above, we show in Theorem~\ref{thm:locating_c} that each random mask $\mm$ would induce a specific (sub)set of latent variables that fully captures the statistical dependency between the masked part and the visible part. We denote this relationship as $\cc \subset \mZ$ where $\cc$ is the subset of the latent variable set $\mZ$. 

\begin{restatable}{theorem}{locatec} \label{thm:locating_c}
    (Locating the shared information $\cc$): In a hierarchical latent variable structure $\mG$, for each specific mask $ \mm $, there exists a corresponding minimal set of latent variables $\cc$ such that the generating process of $\xx$ can be expressed as in Figure~\ref{fig:cs_model} where the following conditions are satisfied:
    \begin{enumerate}
        \item $\xx_{\mm} = g_{\xx_{\mm}} ( \cc, \sss_{\mm} ) $ and $ \xx_{\mm^{c}} = g_{\xx_{\mm^{c}}} ( \cc, \sss_{\mm^{c}} ) $ where both $ g_{\xx_{\mm}} $ and $g_{\xx_{\mm^{c}}}$ are invertible;
        \item $ \sss_{\mm} \ind (\cc, \sss_{\mm^{c}}) $;
        % \item $ \xx_{\mm} \ind \xx_{\mm^{c}} | \cc$;
        % \item $\cc \subset \text{Ancestors} (\xx_{\mm}) \cap \text{Ancestors} (\xx_{\mm^{c}})$;
        % \item $ \xx \ind (\cc', \sss') | \cc $ where $ \cc' $
        \item $ \cc $ is minimal: $ \forall \cc' \subset \mZ $ such that $ \text{dim}(\cc') < \text{dim} (\cc) $, $\cc'$ cannot satisfy the two conditions above.
    \end{enumerate}
    Such $\cc$ and the corresponding $\sss_{\mm}$ are unique and can be located from the hierarchical structure by executing Algorithm~\ref{alg:locate_c}. Furthermore, $ \sss_{\mm^{c}} $ can be found through Algorithm~\ref{alg:locate_smc}.
\end{restatable}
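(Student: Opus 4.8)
The plan is to prove the three conditions constructively, reading the set $\cc$ directly off the ancestral structure of $\mG$ and then verifying minimality and uniqueness separately. First I would define, for the fixed mask $\mm$, the latent ancestor sets $\text{Anc}(\xx_{\mm})$ and $\text{Anc}(\xx_{\mm^c})$ induced by the DAG of Assumption~\ref{assumption:data_generating_process}, and take $\cc$ to be the minimal subset of $\mZ$ that mediates (d-separates) $\xx_{\mm}$ from $\xx_{\mm^c}$. This is exactly what Algorithm~\ref{alg:locate_c} is meant to extract by tracing shared ancestry upward through the hierarchy of Figure~\ref{fig:hierarchical}. Intuitively, $\cc$ is the latent "interface" through which any statistical dependence between the masked and visible pixels must flow, so that everything in $\xx_{\mm}$ not reachable from $\cc$ is private to the masked side.

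With $\cc$ fixed, I would establish Condition~1 (the invertible reparametrization) by composition. Because every structural function $g_{\zz_i}, g_{\xx_j}$ in Eq.~\eqref{eq:sem} is invertible, the map from any complete ancestral cut of $\xx_{\mm}$ to $\xx_{\mm}$ itself is a composition of invertible maps and is therefore invertible; taking $\sss_{\mm}$ to be the ancestral variables feeding $\xx_{\mm}$ that are not in $\cc$ gives $\xx_{\mm}=g_{\xx_{\mm}}(\cc,\sss_{\mm})$ with $g_{\xx_{\mm}}$ invertible, and symmetrically $\xx_{\mm^c}=g_{\xx_{\mm^c}}(\cc,\sss_{\mm^c})$. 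For Condition~2, I would pass to the exogenous reduced form: unfolding the recursion expresses $\sss_{\mm}$ as an invertible function of the exogenous variables $\bm\varepsilon$ that are ancestors of $\xx_{\mm}$ but of no variable in $\cc$ or on the visible side. Since the exogenous variables are mutually independent and $\cc$ by construction absorbs every shared ancestor, these masked-private exogenous variables are disjoint from, and hence independent of, those generating $(\cc,\sss_{\mm^c})$, which yields $\sss_{\mm}\ind(\cc,\sss_{\mm^c})$.

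Condition~3 (minimality) and the uniqueness claim I would handle together by a separating-set argument. If some $\cc'$ with $\text{dim}(\cc')<\text{dim}(\cc)$ satisfied Conditions~1--2, then $\cc'$ would also have to block every path between $\xx_{\mm}$ and $\xx_{\mm^c}$; but $\cc$ is the minimal such latent interface, so $\cc'$ must leave some active path open, producing a residual dependence between $\sss_{\mm}$ and $(\cc',\sss_{\mm^c})$ and contradicting the required independence. The same characterization forces any valid minimal set to coincide with the constructed interface, giving uniqueness of $\cc$ and of the induced $\sss_{\mm}$. Finally, $\sss_{\mm^c}$ is obtained as the residual information in $\xx_{\mm^c}$ after conditioning out $\cc$, which is precisely what Algorithm~\ref{alg:locate_smc} returns; the asymmetry in Figure~\ref{fig:cs_model} (the dashed $\cc$--$\sss_{\mm^c}$ edge) is permitted because only $\sss_{\mm}$ is required to be fully independent.

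The main obstacle I anticipate is Condition~2 together with the requirement $\cc\subset\mZ$: invertible mixing can entangle shared and private exogenous noise inside a single latent node, so one must exploit the hierarchical structure to show that the minimal interface can always be realized by an actual subset of latent variables for which the masked-private residual carries no shared exogenous ancestry. Proving that the graphical separator and the strong statistical independence coincide exactly at the minimal set, and that this set is unique, is the delicate technical core; the invertibility and composition arguments for Condition~1, by contrast, are essentially bookkeeping.
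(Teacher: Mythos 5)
Your overall route matches the paper's: you construct $\cc$ from the shared ancestral structure of the DAG (which is what Algorithm~\ref{alg:locate_c} does), obtain Condition~1 by composing the invertible structural functions along the backtracked ancestry, and obtain Condition~2 by reducing $\sss_{\mm}$ to exogenous variables and invoking their mutual independence together with the absence of shared ancestry --- the paper runs this last step as an explicit two-case contradiction ($\dd\in\cc$ versus $\dd\in\sss_{\mm^{c}}$), but the underlying idea is the same.

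The genuine gap is in your minimality/uniqueness step, which as written is circular: you define $\cc$ to be ``the minimal subset of $\mZ$ that d-separates $\xx_{\mm}$ from $\xx_{\mm^{c}}$'' and then argue that any $\cc'$ with $\text{dim}(\cc')<\text{dim}(\cc)$ must leave an active path open \emph{because} $\cc$ is the minimal such interface. That does not follow: Condition~3 is stated in terms of \emph{dimension}, not set inclusion, so a competing separator $\cc'$ need not be a subset of $\cc$, and nothing in your argument rules out a different separating set, sitting at different levels of the hierarchy, with strictly smaller total dimension. The paper closes exactly this hole with a structural fact you never invoke: because each generating function in Equation~\ref{eq:sem} is invertible, a child's dimension is at least that of any of its parents, so dimension is non-decreasing along every directed path; hence on each undirected path (directed into $\xx_{\mm}$ on one side and into $\xx_{\mm^{c}}$ on the other) the apex variable is the unique smallest-dimension blocker, and Algorithm~\ref{alg:locate_c} selects precisely those apexes. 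This monotonicity is what upgrades ``any valid $\cc'$ must be a separator'' to ``any valid $\cc'$ has dimension at least $\text{dim}(\cc)$'' and simultaneously yields uniqueness; without it your Condition~3 argument does not go through. (A smaller issue: your Condition~2 sketch asserts disjointness of exogenous ancestries ``by construction'' and thereby skips the case where an exogenous variable in $\sss_{\mm}$ is an ancestor of something in $\sss_{\mm^{c}}$ without being an ancestor of $\cc$; the paper disposes of this as a separate contradiction case, and the fix is routine but should be stated.)
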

The proof, Algorithm~\ref{alg:locate_c}, and Algorithm~\ref{alg:locate_smc} can be found in Appendix~\ref{sec:proof_theorem_graph}. We note that although the minimal $\cc$ and its corresponding $\sss_{\mm}$ are unique for a given mask $\mm$, there is no unique $\sss_{\mm^{c}}$ in general. Algorithm~\ref{alg:locate_smc} returns one such instance.

Theorem~\ref{thm:locating_c} states that for each mask $\mm$, there exists a corresponding $\cc$ that represents all the information contained in the visible part $ \xx_{\mm^{c}} $ that is conducive to reconstructing the masked part $ \xx_{\mm} $. 
Algorithm~\ref{alg:locate_c} can locate such $\cc$ in the hierarchy and directly characterizes the impact of masking on the property of $\cc$.

Next, in Theorem~\ref{thm:mae_identifiability}, we show that MAE learning objective (Equation~\ref{eq:mae_obj}) estimates $\cc$ specified in Theorem~\ref{thm:locating_c}, and MAE attains a form of identifiability of $\cc$. We first lay out the assumptions:
\begin{assumption} \label{assumption:mae_model} 
(MAE model): For any mask $\mm$, the MAE decoder $ D_{\mm}(\hat{\cc}, \hat{\sss}_{\mm}) $ has a non-singular Jacobian matrix almost anywhere, and there exists an invertible function $ \tilde{g}_{\mm^{c}} (\cdot) $ such that MAE encoder $ E_{\mm^{c}}( \cdot ) = [\tilde{g}_{\mm^{c}}^{-1} (\cdot) ]_{1:d_{c}} $ where $[\cdot]_{1:d_{c}}$ denotes the dimensions corresponding to $\cc$.
Moreover, $(D_{\mm}, \tilde{g}_{\mm^{c}})$ forms an invertible mapping between $ (\hat{\cc}, \hat{\sss}_{\mm}, \hat{\sss}_{\mm^{c}}) $ and $ (\xx_{\mm}, \xx_{\mm^{c}}) $
\end{assumption}

Next, we show MAE identifies the shared information $\cc$:

\begin{restatable}{theorem}{globalidentifiability} \label{thm:mae_identifiability}
    (Identifiability of $\cc$): 
    For each mask $\mm$, given the dimensions $ (d_{\cc}, d_{\sss_{\mm}})$ the encoder function $E_{\mm^{c}}(\cdot)$ recovers all information of $\cc$ located in Theorem~\ref{alg:locate_c}, i.e., there exists a one-to-one mapping $h$, s.t., $ h(\cc) = \hat{\cc} $.
\end{restatable}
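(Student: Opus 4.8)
The plan is to show that at any global minimizer of the reconstruction objective (Equation~\ref{eq:mae_obj}), the encoder output $\hat{\cc}$ is related to the true shared latent $\cc$ of Theorem~\ref{thm:locating_c} by a bijection $h$. By Theorem~\ref{thm:locating_c} the visible part factorizes as $\xx_{\mm^{c}} = g_{\xx_{\mm^{c}}}(\cc, \sss_{\mm^{c}})$ with $g_{\xx_{\mm^{c}}}$ invertible, and by Assumption~\ref{assumption:mae_model} the encoder acts as $\hat{\cc} = [\tilde g_{\mm^{c}}^{-1}(\xx_{\mm^{c}})]_{1:d_{\cc}}$ with $(\hat{\cc},\hat{\sss}_{\mm^{c}}) = \tilde g_{\mm^{c}}^{-1}(\xx_{\mm^{c}})$. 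Composing these two invertible maps, $(\hat{\cc}, \hat{\sss}_{\mm^{c}})$ is an invertible reparametrization of $(\cc, \sss_{\mm^{c}})$; in particular $\hat{\cc} = \phi(\cc, \sss_{\mm^{c}})$ for a deterministic $\phi$. Hence it suffices to prove two properties at the optimum: \emph{invariance}, that $\phi$ does not depend on its second argument $\sss_{\mm^{c}}$; and \emph{completeness}, that the resulting map $\cc \mapsto \hat{\cc}$ is injective. Together with $\dim(\hat{\cc}) = d_{\cc} = \dim(\cc)$, these yield the desired bijection $h$.

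The core is a reconstruction-optimality argument. First I would characterize the minimal achievable loss. Since $\xx_{\mm} = g_{\xx_{\mm}}(\cc,\sss_{\mm})$ and $\sss_{\mm} \ind (\cc, \sss_{\mm^{c}})$ (Theorem~\ref{thm:locating_c}), we have $\xx_{\mm} \ind \sss_{\mm^{c}} \mid \cc$, so the entire predictive content of the visible part about $\xx_{\mm}$ is carried by $\cc$ through the regression function $\mu(\cc) := \mathbb{E}[\xx_{\mm}\mid \cc]$. Writing $\hat{\cc}$ as a function of $(\cc,\sss_{\mm^{c}})$ and noting that the auxiliary $\hat{\sss}_{\mm}$ carries no information about $\xx_{\mm}$ beyond $\hat{\cc}$, the $L^{2}$-optimal decoder outputs $\mathbb{E}[\xx_{\mm}\mid \hat{\cc}] = \mathbb{E}[\mu(\cc)\mid \hat{\cc}]$, and an orthogonality/total-variance decomposition gives
\begin{align}
L(E,D) \ge \mathbb{E}\big[\norm{\xx_{\mm}-\mu(\cc)}^{2}\big] + \mathbb{E}\big[\norm{\mu(\cc)-\mathbb{E}[\mu(\cc)\mid \hat{\cc}]}^{2}\big],
\end{align}
where the first term is the irreducible $\sss_{\mm}$-variance, independent of the encoder. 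Global optimality forces the second term to vanish, i.e. $\mu(\cc)$ is $\hat{\cc}$-measurable: $\hat{\cc}$ retains all information in $\cc$ relevant for $\xx_{\mm}$. This supplies the completeness ingredient.

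It remains to establish invariance and to upgrade completeness to injectivity in $\cc$. Here I would use the \emph{minimality} of $\cc$ (Theorem~\ref{thm:locating_c}, condition 3) together with the dimension budget $\dim(\hat{\cc}) = d_{\cc}$. Because $(\cc, \sss_{\mm^{c}}) \mapsto (\hat{\cc}, \hat{\sss}_{\mm^{c}})$ is invertible, any coordinate of $\hat{\cc}$ that varied with $\sss_{\mm^{c}}$ at fixed $\cc$ would, under the fixed budget $d_{\cc}$, force $\hat{\cc}$ to drop information about $\cc$ — contradicting the $\hat{\cc}$-measurability of $\mu(\cc)$ (for a minimal, non-redundant $\cc$ the sufficient statistic $\mu$ is injective in $\cc$) and the non-singular-Jacobian requirement of Assumption~\ref{assumption:mae_model}. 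Hence $\hat{\cc}=\phi(\cc)$ depends on $\cc$ alone; being an injective map between spaces of equal dimension $d_{\cc}$ with non-singular decoder Jacobian, it is a bijection $h$, as claimed.

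I expect the main obstacle to be precisely this last step: rigorously ruling out ``information leakage,'' i.e. that an optimal $\hat{\cc}$ encodes a nontrivial mixture of $\cc$ and $\sss_{\mm^{c}}$. The loss only pins down the predictable component $\mu(\cc)$, so one must convert ``$\hat{\cc}$ determines $\mu(\cc)$'' into ``$\hat{\cc}$ determines $\cc$ and nothing else''; this is where minimality of $\cc$, the exact dimension matching $d_{\cc}=\dim(\cc)$, and the non-singular Jacobian of Assumption~\ref{assumption:mae_model} must be combined carefully. A secondary subtlety is justifying that the randomly-supplied $\hat{\sss}_{\mm}$ contributes no information about $\xx_{\mm}$ beyond $\hat{\cc}$, so that averaging over $\hat{\sss}_{\mm}$ leaves the variance decomposition intact.
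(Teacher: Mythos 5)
Your overall architecture (invariance plus completeness, then dimension matching) resembles the skeleton of the paper's argument, but the engine you use to drive it --- $L^{2}$ reconstruction optimality and the variance decomposition around $\mu(\cc)=\mathbb{E}[\xx_{\mm}\mid\cc]$ --- does not deliver the conclusion, and this is a genuine gap rather than a presentational one. Minimizing the squared loss only constrains the conditional \emph{mean} of $\xx_{\mm}$ given the encoder output; it is blind to all higher-order structure. Your step ``global optimality forces $\mu(\cc)$ to be $\hat{\cc}$-measurable, which supplies completeness'' therefore only shows that $\hat{\cc}$ determines $\mu(\cc)$, and $\mu$ need not be injective in $\cc$. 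Concretely, take $\xx_{\mm}=g_{\xx_{\mm}}(\cc,\sss_{\mm})=(\cc\,\sss_{\mm},\,\sss_{\mm})$ with $\mathbb{E}[\sss_{\mm}]=0$: this map is invertible with almost-everywhere nonsingular Jacobian, yet $\mu(\cc)\equiv(0,0)$ carries no information about $\cc$ whatsoever, so an encoder that discards $\cc$ entirely can still be $L^{2}$-optimal. Minimality of $\cc$ (condition 3 of Theorem~\ref{thm:locating_c}) is a graph-theoretic property of the latent hierarchy and says nothing about injectivity of the regression function, so it cannot repair this; you flagged this step as the main obstacle, and it is in fact fatal to the $L^{2}$-regression route as stated.

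The paper avoids this by adopting a stronger premise in place of mean-squared optimality. It treats $\hat{\sss}_{\mm}$ as a random input to the decoder and requires, via Assumption~\ref{assumption:mae_model}, that $(D_{\mm},\tilde{g}_{\mm^{c}})$ form an invertible map between $(\hat{\cc},\hat{\sss}_{\mm},\hat{\sss}_{\mm^{c}})$ and $(\xx_{\mm},\xx_{\mm^{c}})$; the MAE optimum is then read as \emph{matching the joint distribution} $p_{\xx_{\mm},\xx_{\mm^{c}}}$ with a generative model of the same form as Figure~\ref{fig:cs_model}. Under that premise it proves a general block-identifiability lemma (Theorem~\ref{thm:generative_identifiability}) by a Jacobian argument: writing $h=\hat{g}^{-1}\circ g$ and differentiating $g_{1}(\cc,\sss_{1})=\hat{g}_{1}(h_{c,s_{1}}(\cc,\sss_{1},\sss_{2}))$ with respect to the coordinates of $\sss_{2}$ yields a nonsingular linear system forcing $\partial h_{(c,s_{1})}/\partial \sss_{2}=0$, symmetrically for $\sss_{1}$, and then swaps the roles of true and estimated quantities to obtain the bijection. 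To salvage your route you would need either to assume exact invertible reconstruction (which collapses to the paper's distribution-matching premise) or to add an assumption making $\cc\mapsto\mathbb{E}[\xx_{\mm}\mid\cc]$ injective; your proposal currently supplies neither.
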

In the following, we discuss our assumptions and results. The proof can be found in Appendix~\ref{sec:identifiability_proof}. 
\paragraph{Assumption interpretation.} 
Assumption~\ref{assumption:data_generating_process} follows prior work identifying latent variables in deep generative models~\cite{locatello2020weakly,von2021self} to ensure that latent variables are recoverable from pixels.
Assumption~\ref{assumption:mae_model} requires the MAE encoder $E_{\mm^{c}}$ to be part of an invertible function output -- this is mild and allows the encoder to be more flexible than invertible functions.
% The decoder can be relaxed to locally invertible (see Lines 1357-1361) almost anywhere, which is much more flexible, and Theorem 2.2 still holds,  
The decoder $D_{\mm} (\hat{\cc}, \hat{\sss}_{\mm})$ is assumed to be locally invertible in $\hat{\cc}$ almost surely, allowing for a broader class than invertible functions, e.g., nondegenerate polynomials.
The joint invertibility of $(D_{\mm}, \tilde{g}_{\mm^{c}})$ ensures no information loss during the estimation process.

\paragraph{How does MAE work?}
Theorem~\ref{thm:mae_identifiability} states that the MAE objective (Equation~\ref{eq:mae_obj}) essentially serves to estimate the shared variable $\cc$ and is able to restore all information in $\cc$.
Therefore, the efficacy of MAE stems from its ability to extract high-level semantic representations from low-level features like image pixels.
Moreover, our theory indicates the possibility of fully identifying a latent hierarchical structure via properly designed self-supervised objectives, opening up research avenues for future work.

\textbf{Takeaway}: \textit{\ul{MAE provably recovers high-level representations from low-level features like pixels.}}
% \begin{tcolorbox}[width=\columnwidth,colback={white},title={},colbacktitle=yellow,coltitle=blue]
%   Takeaway: MAE provably recovers high-level representations from low-level features like pixels.
% \end{tcolorbox}   
% \lnote{highlight the sentences in the with textit{} remove  }

\label{para:masking_mae_theory}
\paragraph{How does masking influence the learned representation?}
Theorem~\ref{thm:locating_c} establishes a direct connection between the mask $\mm$ and the shared information $\cc$, which is further connected to the MAE estimate $\hat{\cc}$ in Theorem~\ref{thm:mae_identifiability}.
We can observe that conservative masking with overly small masking ratios and masking patch sizes inevitably leads to low-level latent variables. 
To see this, in Figure~\ref{subfig:low_masking_intensity}, the mask is not large enough to cover all observable descendants of a desirable high-level variable $\zz_{1}$, thus following Algorithm~\ref{alg:locate_c} a low-level variable $\zz_{3}$ will mix in $\hat{\cc}$, preventing the model from learning $\zz_{1}$.
% Algorithm~\ref{alg:locate_c} requires should be  will mix in the estimation of $\cc$ if the mask is not sufficiently large to mask out all pixels that are descendants of $\zz$. The visualization is shown in Figure~\ref{subfig:low_masking_intensity}, where a low level latent variable z3 is estimated because the masks cannot cover all its children, but our goal is to recover high-level z1 or z2.
% Intuitively, if visible pixels are in the close proximity of the masked ones, then low-level representations (short-range dependencies) are sufficient for the reconstruction.
This insight highlights the necessity of nontrivial masking ratios and patch sizes and resonates with the empirical observations in \cite{he2021masked,hu2022exploring,xie2022simmim}.

\begin{figure}[htp!]%\vspace{2mm}
    \centering
    \begin{subfigure}{0.32\columnwidth}
    \begin{tikzpicture}[scale=.6, line width=0.4pt, inner sep=0.1mm, shorten >=.1pt, shorten <=.1pt]
		\draw (-2.3, 2.5) node(L1)  {{\footnotesize\lacl\,{$\zz_1$}\,}};
		\draw (-1, 2.5) node(L2)  {{\footnotesize\lacl\,{$\zz_2$}\,}};
		\draw (-3, 1) node(L3) {{\footnotesize\blue\,{$\zz_3$}\,}};
		\draw (-2, 1) node(L4)  {{\footnotesize\lacl\,$\zz_4$\,}};
		\draw (-1, 1) node(L5)  {{\footnotesize\lacl\,$\zz_5$\,}};
		\draw (0, 1) node(L6)  {{\footnotesize\lacl\,$\zz_6$\,}};
% 		\draw (2.8, 1) node(L5)  {{\footnotesize\lacl\,$L_5$\,}};
		\draw (-3.3, -0.5) node(X1)  {{\footnotesize\,$\stkout{\xx_1}$\,}};
		\draw (-2.6, -0.5) node(X2)  {{\footnotesize\,$\xx_2$\,}};
		\draw (-1.8, -0.5) node(X3)  {{\footnotesize\,$\xx_3$\,}};
		\draw (-1.1, -0.5) node(X4)  {{\footnotesize\,$\xx_4$\,}};
		\draw (-.4, -0.5) node(X5)  {{\footnotesize\,$\xx_5$\,}};
		\draw (0.3, -0.5) node(X6)  {{\footnotesize\,$\xx_{6}$\,}};
	   \draw[-latex] (L1) -- (L3);
	   \draw[-latex] (L1) -- (L4);
	   %\draw[-latex] (L1) -- (L5);
	   \draw[-latex] (L2) -- (L4);
	   \draw[-latex] (L2) -- (L5);
	   \draw[-latex] (L2) -- (L6);
	   \draw[-latex] (L3) -- (X1);
	   \draw[-latex] (L3) -- (X2);
	   \draw[-latex] (L3) -- (X3);
	   %\draw[-latex] (L3) -- (L9);
	   %\draw[-latex] (L3) -- (L10);
	   
	   %\draw[-latex] (L4) -- (L5);
	   \draw[-latex] (L4) -- (X2);
	   \draw[-latex] (L4) -- (X3);
	   %\draw[-latex] (L4) -- (L9);
	   \draw[-latex] (L5) -- (X4);
	   \draw[-latex] (L5) -- (X5);
	   \draw[-latex] (L6) -- (X4);
	   \draw[-latex] (L6) -- (X5);
	   \draw[-latex] (L6) -- (X6);
	\end{tikzpicture}
	\caption{\scriptsize\textbf{Conservative mask}}
	\label{subfig:low_masking_intensity}
    \end{subfigure}
    \begin{subfigure}{0.32\columnwidth}
    \begin{tikzpicture}[scale=.6, line width=0.4pt, inner sep=0.1mm, shorten >=.1pt, shorten <=.1pt]
		\draw (-2.3, 2.5) node(L1)  {{\footnotesize\lacl\,{$\zz_1$}\,}};
		\draw (-1, 2.5) node(L2)  {{\footnotesize\lacl\,{$\zz_2$}\,}};
		\draw (-3, 1) node(L3) {{\footnotesize\lacl\,{$\zz_3$}\,}};
		\draw (-2, 1) node(L4)  {{\footnotesize\lacl\,$\zz_4$\,}};
		\draw (-1, 1) node(L5)  {{\footnotesize\lacl\,$\zz_5$\,}};
		\draw (0, 1) node(L6)  {{\footnotesize\blue\,$\zz_6$\,}};
% 		\draw (2.8, 1) node(L5)  {{\footnotesize\lacl\,$L_5$\,}};
		\draw (-3.3, -0.5) node(X1)  {{\footnotesize\,$\stkout{\xx_1}$\,}};
		\draw (-2.6, -0.5) node(X2)  {{\footnotesize\,$\stkout{\xx_2}$\,}};
		\draw (-1.8, -0.5) node(X3)  {{\footnotesize\,$\stkout{\xx_3}$\,}};
		\draw (-1.1, -0.5) node(X4)  {{\footnotesize\,$\stkout{\xx_4}$\,}};
		\draw (-.4, -0.5) node(X5)  {{\footnotesize\,$\stkout{\xx_5}$\,}};
		\draw (0.3, -0.5) node(X6)  {{\footnotesize\,$\xx_{6}$\,}};
	   \draw[-latex] (L1) -- (L3);
	   \draw[-latex] (L1) -- (L4);
	   %\draw[-latex] (L1) -- (L5);
	   \draw[-latex] (L2) -- (L4);
	   \draw[-latex] (L2) -- (L5);
	   \draw[-latex] (L2) -- (L6);
	   \draw[-latex] (L3) -- (X1);
	   \draw[-latex] (L3) -- (X2);
	   \draw[-latex] (L3) -- (X3);
	   %\draw[-latex] (L3) -- (L9);
	   %\draw[-latex] (L3) -- (L10);
	   
	   %\draw[-latex] (L4) -- (L5);
	   \draw[-latex] (L4) -- (X2);
	   \draw[-latex] (L4) -- (X3);
	   %\draw[-latex] (L4) -- (L9);
	   \draw[-latex] (L5) -- (X4);
	   \draw[-latex] (L5) -- (X5);
	   \draw[-latex] (L6) -- (X4);
	   \draw[-latex] (L6) -- (X5);
	   \draw[-latex] (L6) -- (X6);
	\end{tikzpicture}
	\caption{\scriptsize\textbf{Aggressive mask}}
	\label{subfig:high_masking_intensity}
    \end{subfigure}
    \begin{subfigure}{0.32\columnwidth}
    \begin{tikzpicture}[scale=.6, line width=0.4pt, inner sep=0.1mm, shorten >=.1pt, shorten <=.1pt]
		\draw (-2.3, 2.5) node(L1)  {{\footnotesize\lacl\,{$\zz_1$}\,}};
		\draw (-1, 2.5) node(L2)  {{\footnotesize\blue\,{$\zz_2$}\,}};
		\draw (-3, 1) node(L3) {{\footnotesize\lacl\,{$\zz_3$}\,}};
		\draw (-2, 1) node(L4)  {{\footnotesize\lacl\,$\zz_4$\,}};
		\draw (-1, 1) node(L5)  {{\footnotesize\lacl\,$\zz_5$\,}};
		\draw (0, 1) node(L6)  {{\footnotesize\lacl\,$\zz_6$\,}};
% 		\draw (2.8, 1) node(L5)  {{\footnotesize\lacl\,$L_5$\,}};
		\draw (-3.3, -0.5) node(X1)  {{\footnotesize\,$\stkout{\xx_1}$\,}};
		\draw (-2.6, -0.5) node(X2)  {{\footnotesize\,$\stkout{\xx_2}$\,}};
		\draw (-1.8, -0.5) node(X3)  {{\footnotesize\,$\stkout{\xx_3}$\,}};
		\draw (-1.1, -0.5) node(X4)  {{\footnotesize\,$\xx_4$\,}};
		\draw (-.4, -0.5) node(X5)  {{\footnotesize\,$\xx_5$\,}};
		\draw (0.3, -0.5) node(X6)  {{\footnotesize\,$\xx_{6}$\,}};
	   \draw[-latex] (L1) -- (L3);
	   \draw[-latex] (L1) -- (L4);
	   %\draw[-latex] (L1) -- (L5);
	   \draw[-latex] (L2) -- (L4);
	   \draw[-latex] (L2) -- (L5);
	   \draw[-latex] (L2) -- (L6);
	   \draw[-latex] (L3) -- (X1);
	   \draw[-latex] (L3) -- (X2);
	   \draw[-latex] (L3) -- (X3);
	   %\draw[-latex] (L3) -- (L9);
	   %\draw[-latex] (L3) -- (L10);
	   
	   %\draw[-latex] (L4) -- (L5);
	   \draw[-latex] (L4) -- (X2);
	   \draw[-latex] (L4) -- (X3);
	   %\draw[-latex] (L4) -- (L9);
	   \draw[-latex] (L5) -- (X4);
	   \draw[-latex] (L5) -- (X5);
	   \draw[-latex] (L6) -- (X4);
	   \draw[-latex] (L6) -- (X5);
	   \draw[-latex] (L6) -- (X6);
	\end{tikzpicture}
	\caption{\scriptsize\textbf{Ideal mask}}
	\label{subfig:ideal_masking_intensity}
    \end{subfigure}
\caption{\textbf{The impact of masking on the learned representation.} We label the masked pixels with $\stkout{\xx}$. We locate the MAE learned latent variables with Algorithm~\ref{alg:locate_c} and label them with blue. We can observe that extremely low (left) and high (middle) masking intensities lead to low-level representations, whereas the desirable masking intensity that yields a high-level representation lies in the intermediate masking aggressiveness.}
\vspace{-1em}
\label{fig:illustrations_masking_effects}
\end{figure}
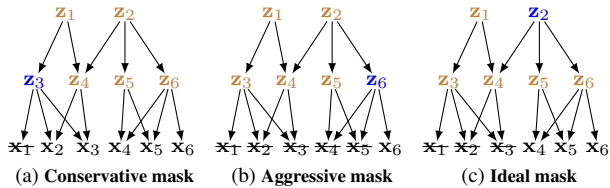

Surprisingly, the above reasoning can be applied to the case with extremely aggressive masking: in Figure~\ref{subfig:high_masking_intensity} low-level latent variables $\zz_{6}$ will be learned by MAE when the visible part is too small to cover all observable descendants of a desirable high-level variable $\zz_{2}$.
Thus, the learned representation does not become monotonically more high-level with increasing masking aggressiveness -- overly aggressive masking also gives rise to low-level representations.
%, as visualized in Figure~\ref{subfig:high_masking_intensity}.
%Intuitively, as visualized in Figure~\ref{subfig:high_masking_intensity}, the information shared between the visible and the masked part (i.e.\ the information of $\cc$) is limited by the one with a smaller amount of information.
%Thus, when a very limited number of visible pixels are available, most of the information is lost and only the masked pixels near the visible ones can be reconstructed via short-range dependencies.
This insight echoes the empirical finding in \cite{xie2022simmim,wu2022extreme} where the extremely large masking degrades the performance of high-level downstream tasks like classification~\cite{xie2022simmim} but yields relatively low-level representations like the object locations/scales in the image~\cite{wu2022extreme}.
In Section~\ref{sec:experiments}, we present empirical evidence to verify our theoretical insights.  

\textbf{Takeaway}: \textit{\ul{(1) MAE under different masking intensities learns representations of different abstraction levels; (2) Learning high-level representations is very hard with extreme masking.}}

% \begin{tcolorbox}[width=\columnwidth,colback={SpringGreen},title={},colbacktitle=yellow,coltitle=blue]    
%   Takeaway: (1) MAE under different masking intensities learns representations of different abstraction levels. (2) Learning high-level representations is impossible with extreme masking.
% \end{tcolorbox}

\paragraph{Is current MAE optimal for representation learning?}
As reflected in the discussion above, although MAE offers the flexibility of tuning the masking scheme to learn representations of various levels, it is inherently challenging to learn high-level representations by random masking without prior knowledge of the latent structure. 
In contrast, contrastive learning~\cite{chen2020simple,he2020momentum,chen2021empirical,caron2021emerging,Caron2020UnsupervisedLO,bardes2021vicreg,zbontar2021barlow} actively leverages the prior knowledge encoded in data augmentations to extract the augmentation-invariant latent variables~\cite{von2021self} which correspond to the high-level latent variables in our hierarchical model.
Our theory suggests an explanation for why representations learned by contrastive learning are superior to those of MAE on high-level tasks like linear-probing classification. 

\textbf{Takeaway}: \textit{\ul{Learning high-level representations can be challenging for random masking.}}
\begin{figure*}[h]
\centering
\includegraphics[width=0.95\linewidth]{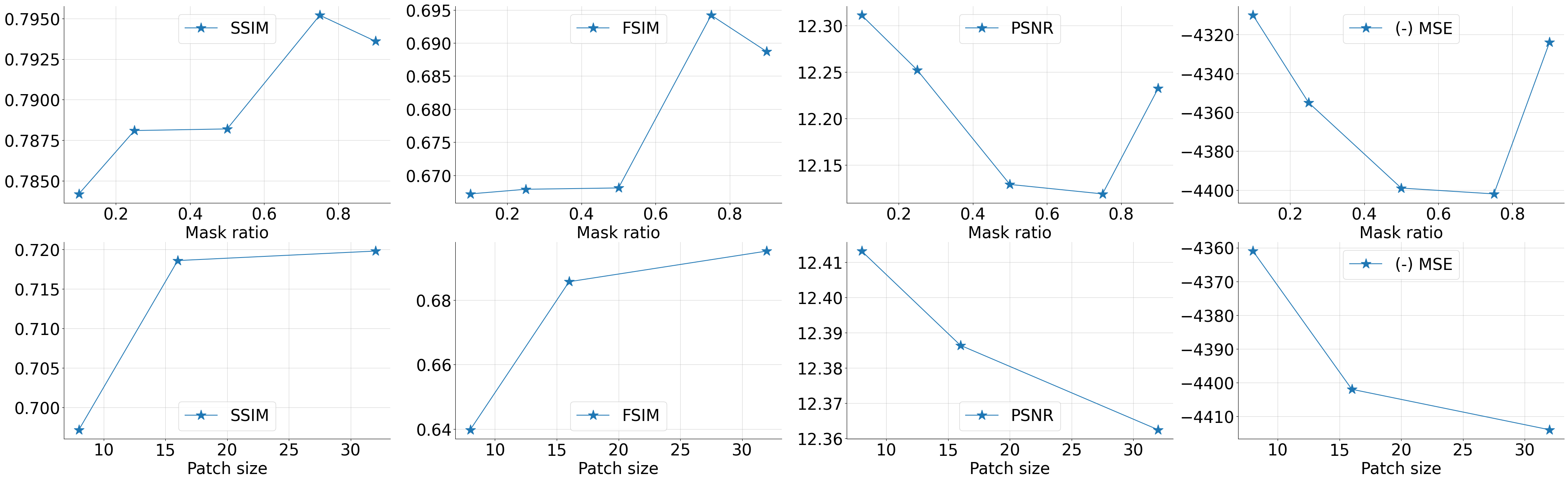}
\vspace{-.3em}
\caption{\textbf{Reconstruction evaluation} using the validation set without masking, based on two structural-level similarity metrics (SSIM and FSIM) and two pixel-level metrics (PSNR and MSE). We plot negative MSE for easier visualization. Higher SSIM and FSIM indicate high-level information is better captured, while higher PSNR and negative MSE indicates better low-level reconstruction.}
\vspace{-1em}
\label{fig:metrics}%\vspace{.5em}
\end{figure*}

\section{Experiments} \label{sec:experiments}
% \mnote{Make the message explicit and connect to theory}
% \mnote{Highlight reconstruction and shape bias}
We conduct five sets of experiments and then provide insights into possible empirical improvements over MAE. 
We investigate the following question: \textit{how does the masking aggressiveness influence the representation?} 
To this end, we pretrain MAE using different masking ratios and making patch sizes, and then conduct the following evaluations: 1) measuring structure-level and pixel-level similarities between the reconstructed and the original images; 2) visualizing self-attentions to understand what is learned; 3) performing linear probing on ImageNet-1K (IN1K) and different ImageNet variants;
4) measuring the shape bias \cite{geirhos2018imagenet} which estimates how much a network leverages high-level shape information over low-level texture information; 
and 5) transfer learning on object detection and segmentation. 
Details of experiments can be found in Appendix.

\paragraph{Pretraining overview.} We conduct pretraining on IN1K using the MAE pipeline \cite{he2021masked}, with ViT-Base as the backbone of our study. We conduct two sets of pretraining: 1) fixing patch size at $16$ and varying the masking ratios from $\{0.1, 0.25, 0.5, 0.75, 0.9\}$. Larger masking ratios suggest larger portions of pixels being masked, i.e., $0.9$ suggests $90\%$ of pixels being randomly masked for the encoder. 2) Fix the masking ratio at $0.75$ and vary the patch size from $\{8, 16, 32\}$. 
To decouple the patch size for masking images and the patch size hyperparameter in the Vision Transformer, we adopt the implementation from \cite{hu2022exploring}. The patch size studied in this paper refers to the minimal \textit{masking unit} size, and the hyperparameter of the ViT patch size remains fixed at 8. 

\subsection{Reconstructing High-level or Low-level Representations}

\paragraph{Setup.} We begin our study by evaluating the high-level structural and low-level pixel-wise similarities between the reconstructed images from MAE and the original inputs. We choose two metrics for high-level similarities and two metrics for low-level similarities. If the structural similarities are high, MAE captures more perceivable structural semantics from the input. The two high-level similarities are structural similarity index measure \cite{wang2004image} (\textbf{SSIM}) and feature similarity index measure \cite{zhang2011fsim} (\textbf{FSIM}). Both metrics consider the change of perceptions in structural information \cite{johnson2016perceptual}. SSIM considers the normalized mean value of the structural similarity between the original and reconstructed images, and FSIM considers the normalized mean value of the feature similarity between the two images. A higher SSIM or a higher FSIM suggests a better reconstruction of high-level information (structural or feature-wise). On the other hand, if the pixel-level similarity between reconstructed images and the original input is high, then MAE is deemed to capture the low-level information about the input better. The two low-level metrics are the mean squared error (\textbf{MSE}), which is the squared differences between the original and reconstructed images in the pixel space, and the peak signal-to-noise ratio (\textbf{PSNR}), which measures the ratio between the power of the maximum possible pixel value and the power of corruption noise. A lower MSE or a \textit{higher} PSNR suggests a better reconstruction at the pixel level. Note that a very low MSE or a very high PSNR may also suggest that the model captures high-level information well. All four metrics are full reference, meaning the assessment is based on comparing original and reconstructed images rather than the reconstructed output. We introduce the high-level and low-level metrics below and perform the reconstructions on the IN1K evaluation set. The full details and comparisons of the four metrics can be found in \cite{sara2019image}.

\paragraph{Evaluation of image reconstructions.} We include the results in Figure \ref{fig:metrics}. 
We plot the negative of the MSE to show a consistent trend with PSNR, so higher means better low-level reconstruction. From the first row, varying masking ratios from $0.1$ to $0.75$, higher masking ratios produce reconstructions with higher structural information similarities with the original image (higher SSIM and FSIM), but the model trained with the extremely high ratio 0.9 captures more low-level information (higher PSNR and higher negative MSE). 
On the other hand, lower masking ratios tend to reconstruct images that capture low-level information better. 
From the second row, larger patch sizes produce image reconstructions that capture high-level similarities better, while smaller patch sizes have low-level metrics. 
The empirical observations validate our insight from Section \ref{para:masking_mae_theory}: \textit{higher masking ratios and patch sizes capture high-level structural information better}, but \textit{extreme masking ratios (both low and high) capture less high-level and more low-level information.} 

\subsection{Attention Analysis}
\begin{figure}[t]\centering
\includegraphics[width=0.96\linewidth]{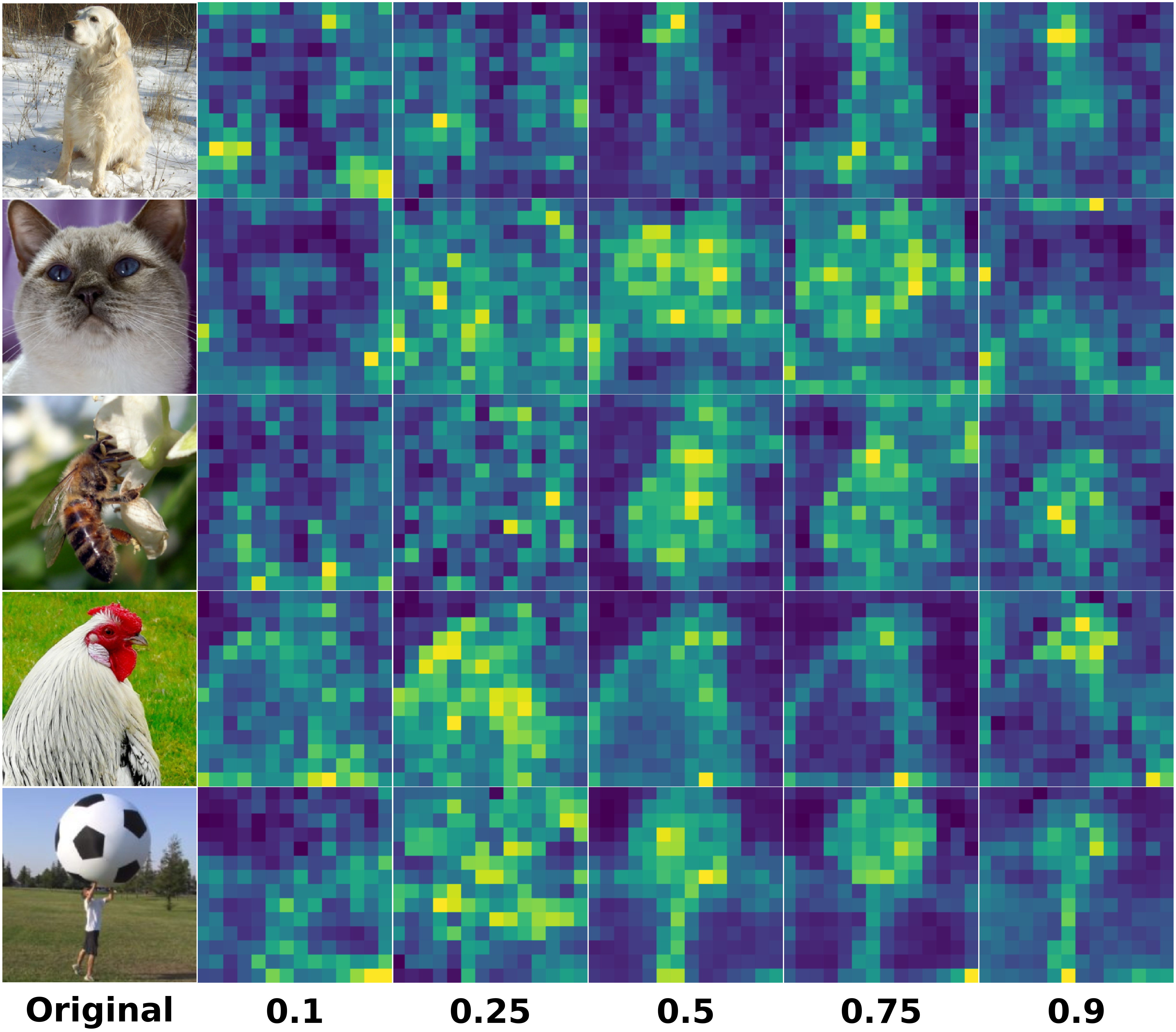}\vspace{-1em}
\caption{\textbf{Self-attention of the \texttt{[CLS]} tokens} averaged across the heads of the last layer in MAE.}
\label{fig:mask_dino}
\end{figure}

\begin{figure}[t]\centering
\includegraphics[width=0.96\linewidth]{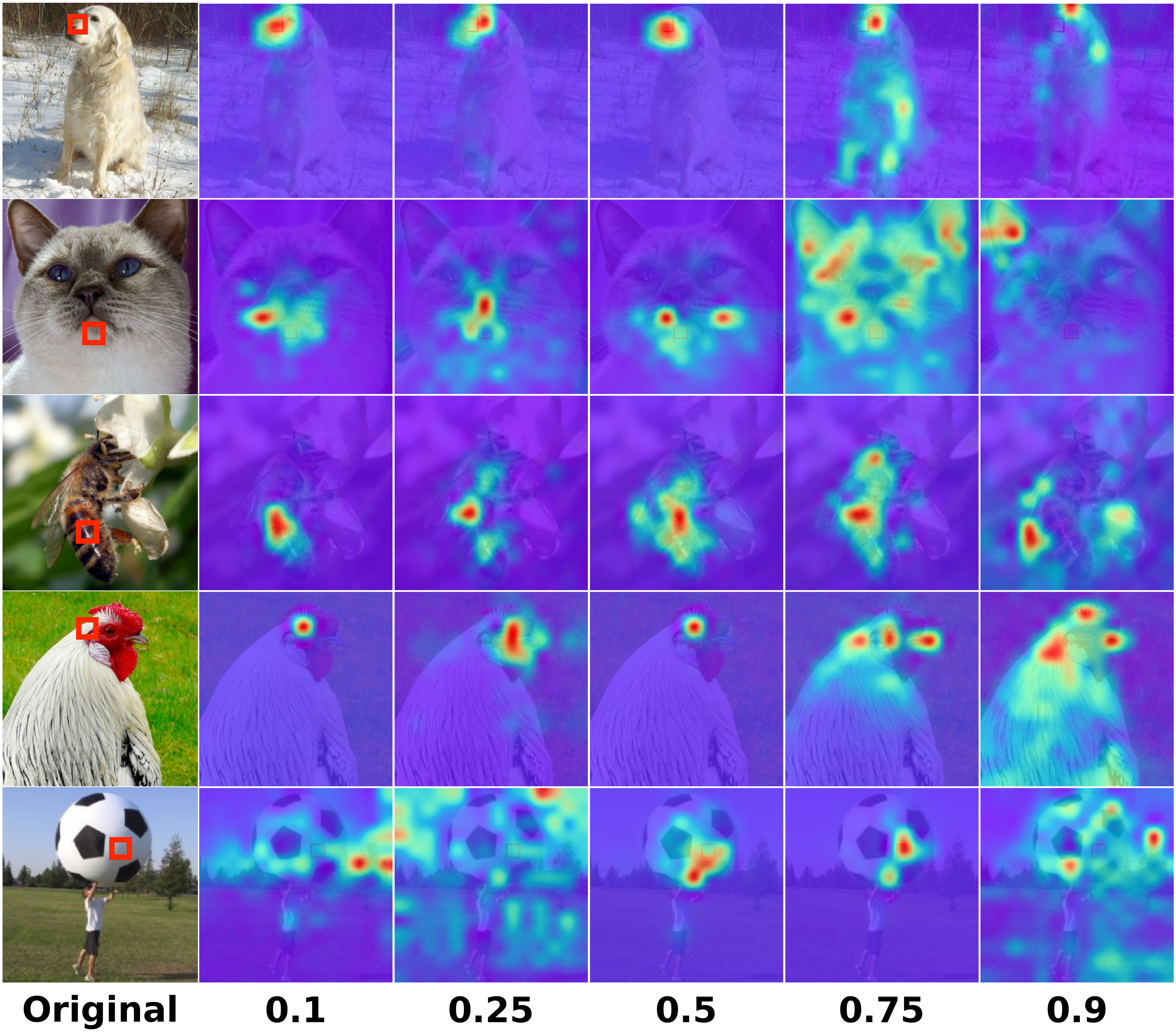}\vspace{-1em}
\caption{\textbf{Self-attention of an object-related token}. Chosen tokens are shown in red squares: dog nose, cat chin, bee abdomen, chicken head, and football center, respectively. }
\label{fig:mask_object_token}
\vspace{-1em}
\end{figure}

In this section, we measure the property of the learned representations of MAE by probing the attention heads. We would like to understand visually how masking ratios and patch sizes influence MAE's capacity to capture object-centric semantics.
We provide two types of visualization: self-attention on the \texttt{[CLS]} token and self-attention on an object-related token.  \texttt{[CLS]} has been considered a compact token to represent the whole image for downstream tasks, although recent work \cite{he2021masked} suggests that the average pooling of all tokens may achieve slightly better results. Therefore, we also provide an analysis of object-related tokens to evaluate if MAE can contextualize object information across tokens.

% \paragraph{Attentions on the \texttt{[CLS]} token.}

We plot examples of self-attention of the \texttt{[CLS]} token in Figure \ref{fig:mask_dino} and self-attention of non-CLS tokens related to the object in Figure \ref{fig:mask_object_token}.  From the visualizations, as the masking ratio increases from $10\%$ to $90\%$, the model is increasingly more able to grasp succinct information about the holistic objects rather than only focusing on the regions around the chosen token. However, extreme ratio $0.9$ contains more low-level information and background information and cannot capture most of the remaining tokens related to objects (e.g., the dog, cat, and bee images in Figure \ref{fig:mask_object_token}). Extremely low masking ratios such as $0.1$ capture both object-related and background tokens. Similarly, extreme masking ratios contextualize over other object-related tokens worse than intermediate masking ratios. 
We include the visualizations for patch sizes in Appendix. 
We observe that models trained with larger patch sizes better capture high-level information, but extreme patch size hurts, which validates our theoretical insight that \textit{moderate masking ratios and patch sizes are critical for MAE to learn succinct and comprehensive object information.}

% From the visualization, as the masking ratio increases from $10\%$ to $90\%$, the model gradually becomes more capable to grasp the holistic objects, Interestingly, with an extremely high masking $0.9$, More examples and examples of patch sizes are in Appendix. We conclude that a higher 
\subsection{Representation Linear Separability}
%################################################################################################
\begin{figure*}[h]
\centering
\includegraphics[width=0.9\linewidth]{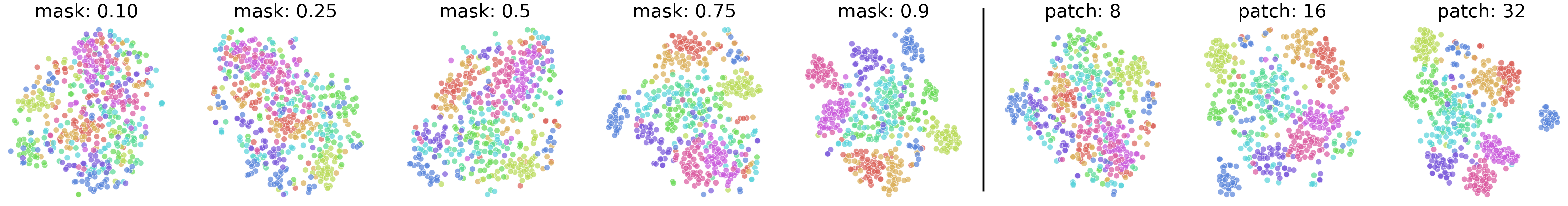}
\vspace{-1em}
\caption{T-SNE embeddings of different MAE models under varied masking ratios and patch sizes. We fix the patch size at $16$ to vary the masking ratios and fix the masking ratio at $0.75$ to change the patch sizes. Each color represents one ImageNet class. }
\label{fig:tsne}\vspace{-1em}
\end{figure*}
%##################################################################################################
\paragraph{T-SNE embedding visualizations.} To gain a visual understanding of how masking ratios and patch sizes influence the representation structure, we visualize T-SNE~\cite{van2008visualizing} embeddings of different models. 
We randomly select ten classes from ImageNet. 
The results are shown in Figure~\ref{fig:tsne}. 
From $0.1$ to $0.75$, a larger masking ratio consistently produces a more linearly separable representation, while the linear separabilities of representations with masking ratios $0.75$ and $0.9$ are visually similar. 
For different patch sizes, the embeddings are more separated as the patch sizes grow. \textit{Non-extreme masking ratios and larger patch sizes generate more linearly separable embeddings.}

\paragraph{Linear probing on IN1K.} We use linear probing to test how linearly separable the features are in the learned MAE representation. We show the linear probing results in Table \ref{tab:linear_prob} in row 1N1K. For different masking ratios, similar to the observation in \cite{he2021masked}, the accuracy increases steadily until the masking ratio reaches the sweet point of 0.75. An extremely large masking ratio (0.9) hurts performance. For different patch sizes, which are not shown in \cite{he2021masked}, we observe that the accuracy increases first from 8 to 16, then decreases significantly when the patch size is 32. From the results, higher masking ratios and larger patch sizes perform better at linear probing than lower masking ratios, but extreme masking hurts linear probing.

\paragraph{Robustness evaluation on ImageNet variants.} We evaluate the robustness of the MAE models on different variants of ImageNet validation datasets, or object detection datasets that share similar class information with ImageNet-1K: ImageNet-v2 (INV2) \cite{shankar2020evaluating}, ObjectNet (OJN) \cite{barbu2019objectnet}, ImageNet-Adversarial (IN-A) \cite{hendrycks2021natural}, ImageNet-Rendition  \cite{barbu2019objectnet}, and ImageNet-Sketch (IN-S) \cite{wang2019learning}. 
These datasets share similar semantics and labels with ImageNet but are under different data distributions. 
The MAE models are first trained in a supervised fashion on IN1K for linear probing, and inference is run on the evaluation sets without any training. 
Table~\ref{tab:linear_prob} shows for all evaluation datasets, a reasonably large masking ratio (i.e., $0.75$) achieves better robustness than smaller (i.e., $0.25$) masking ratios, although extremely large ($0.9$) or small ($0.1$) masking ratios hurt the performance. For patch sizes, larger patch sizes yield better robustness evaluations on IN-v2, OJN, IN-R, and IN-S. 
\textit{Non-extreme masking ratios and large patch sizes have stronger robustness performances than extreme masking ratios or patch sizes.}

\begin{table}
\vspace{-.5em}
\tablestyle{3pt}{1.1}
% \begin{tabular}{l l x{24}x{24}x{24}x{24}}
\begin{tabular}{llllllll}
\multirow{1}{*}{mask ratio} &
\multirow{1}{*}{patch size}
& \multicolumn{1}{c}{IN1K} & \multicolumn{1}{c}{IN-v2} & \multicolumn{1}{c}{OJN} & \multicolumn{1}{c}{IN-R} & \multicolumn{1}{c}{IN-A}  & \multicolumn{1}{c}{IN-S} \\
\shline
0.1 & 16 & 47.45 & 34.72 & 9.42 & 14.63 & 2.00 & 7.25 \\
0.25 & 16 & 53.58 & 40.34 & 11.54 & 18.68 & 2.49 & 10.27 \\
0.5 & 16 & 60.07 & 46.71 & 13.94 & 22.44 & 2.89 & 12.58 \\
0.75 & 16 & 67.41 & 54.23 & 18.24 & 25.20 & 3.76 & 15.51 \\
0.9 & 16 & 62.97 & 49.52 & 15.87 & 19.11 & 2.76 & 10.46 \\
\shline
0.75 & 8  & 62.57 & 49.17 & 13.44 & 19.42 & 3.73 &  10.73 \\
0.75 & 16 & 68.96 & 55.94 & 13.73 & 24.23 & 6.29 & 18.81 \\
0.75 & 32 & 73.31 & 61.35 & 19.03 & 27.84 & 12.69 & 28.30 \\

% DINO \cite{Caron2021} & \scriptsize IN1K & 82.8 & - & - & - \\
% MoCo v3 \cite{Chen2021a} & \scriptsize IN1K & 83.2 & 84.1 & - & - \\
% BEiT \cite{Bao2021} & \scriptsize IN1K+DALLE & 83.2 & 85.2 & - & - \\
% \hline
% MAE & \scriptsize IN1K & \underline{83.6} & \underline{85.9} & \underline{86.9} & \textbf{87.8} \\
\end{tabular}
\vspace{-1em}
\caption{\textbf{Accuracy ($\%$) of linear probing and robustness evaluation} on ImageNet variants and ObjectNet. We linear-probe MAE via supervised training on IN1K, and then perform inference on IN1K as well as other evaluation sets.}
\vspace{-2em}
\label{tab:linear_prob}
\end{table}

\subsection{Shape Bias}
\vspace{-.5em}
% \mnote{Including intuition on why exactly masking ratio and patch size affects MAE, very intuitively} \cite{xie2022simmim} Figure 3.
\paragraph{Texture vs. shape bias.} Next, we analyze to what extent different MAE models rely on high-level vs. low-level information. We follow the analysis in \cite{geirhos2018imagenet}, where the authors study whether a model leverages more low-level textures than high-level shapes for classification. As shown in Table~\ref{tab:shape_bias}, intermediate masking ratios (i.e., $0.25, 0.5$, and $0.75$) show a high level of shape bias, suggesting that the corresponding models exploit more high-level shape information.
In contrast, extreme masking ratios (i.e., $0.1$ and $0.9$) leverage more low-level textures. 
This suggests that \textit{extreme masking schemes make it more difficult to capture high-level shapes for MAE.}
\begin{table}[t]
% \vspace{-1.5em}
\tablestyle{5pt}{1.05}
\begin{tabular}{cccccccccc}
mask ratio & 0.1 & 0.25 & 0.5 & 0.75 & 0.9 \\
\shline
shape bias & 0.1352 & 0.2545 & 0.2458 & 0.2563 & 0.2014\\
% \hline
% 0.75 &  8 & & \\
% 0.75 &  16 & & \\
% 0.75 &  32 & & \\

\end{tabular}
% \vspace{-.7em}
\caption{\textbf{Shape bias} \cite{geirhos2018imagenet} measurement, a higher metric indicates that the model classifies images relying on the high-level shape feature rather than the low-level texture feature.}
\label{tab:shape_bias} \vspace{-.7em}
\end{table}

% \subsection{Denoising}

% Add patch size results
% % \subsection{Implications on empirical improvements}

\subsection{Transfer Learning}
Next, we evaluate the quality of MAE models on different downstream tasks. Specifically, we look at object detection and segmentation on the COCO dataset \cite{lin2014microsoft}, which requires a strong semantic understanding of the scenes. 
We finetune Mask R-CNN \cite{he2017mask} end-to-end using MAE-pretrained ViT weights. Following the practice in \cite{he2021masked}, we adapt the ViT backbone to make it compatible with FPN \cite{lin2017feature}. 
In Table \ref{tab:coco}, we report box AP for object detection and mask AP for instance segmentation. 
We reduce the number of epochs to $45$ due to computational constraints. 
We observe that the $0.75$ masking ratio yields the best detection and segmentation average precision, suggesting that the masking ratio $0.75$ generates representation with the best semantic understanding. 
The extremely high masking ratio of $0.9$ and a low masking ratio of $0.1$ hurt the performance. Results of different patch size experiments are included in Appendix. 
The results suggest that \textit{higher, but not extreme, masking ratios generate the best representation of object detection and segmentation tasks.}
%##################################################################################################
\begin{table}[t]
\vspace{-.7em}
\tablestyle{5pt}{1.05}
\begin{tabular}{llcc}
mask ratio & mask size & AP$^\text{box}$ & {AP$^\text{mask}$}\\ 
\shline
0.1 &  16 & 30.47 & 28.24 \\
0.25 &  16 & 32.38 & 29.95 \\
0.5 &  16 & 34.87 & 32.11 \\
0.75 &  16 & \bf 39.72 & \bf 36.35\\
0.9 &  16 & 37.17 & 34.35\\
% \hline
% 0.75 &  8 & & \\
% 0.75 &  16 & & \\
% 0.75 &  32 & & \\

\end{tabular}
\vspace{-.7em}
\caption{\textbf{COCO object detection and segmentation} using a ViT Mask R-CNN baseline. }
\vspace{-1.5em}
\label{tab:coco} 
\end{table}

%##################################################################################################

% \paragraph{COCO keypoints detection.} For COCO keypoints detection

% \begin{table}[t]
% % \vspace{-1.5em}
% \tablestyle{5pt}{1.05}
% \begin{tabular}{llcc}
% mask ratio & mask size & AP$^\text{box}$ & {AP$^\text{mask}$}\\ 
% \shline
% 0.1 &  16 & Experiments running &  \\
% 0.25 &  16 & & \\
% 0.5 &  16 & &  \\
% 0.75 &  16 & & \\
% 0.9 &  16 & & \\
% % \hline
% % 0.75 &  8 & & \\
% % 0.75 &  16 & & \\
% % 0.75 &  32 & & \\

% \end{tabular}
% % \vspace{-.7em}
% \caption{COCO keypoint detection.}
% \label{tab:coco_keypoint} \vspace{-1em}
% \end{table}

% \includegraphics[draft]{foo}

\subsection{Potential Algorithmic Improvements}
Lastly, we discuss empirical suggestions based on our results that could benefit the performance of MAE. 
% We did not conduct experiments to validate such suggestions; rather, we hope they can provide perspectives for further improving MAE:
% \item \lnote{We note that different downstream tasks call for different levels of representations -- we will need an experiment whose the optimal masking ratio is not $0.75$.}

First, as discussed in Section~\ref{sec:theory}, when reconstructing the masked pixels near the boundary between the masked and unmasked regions, the model uses nearby visible pixels to interpolate, therefore capturing low-level pixel information. If high-level representation is desired for downstream tasks, the boundary pixels may be ignored when calculating the objective function.

Next, in light of the limitation of random masking in Section~\ref{sec:theory}, one may leverage the latent structure of the underlying data-generating process for masking designs, which can serve as a more principled approach than recent work that exploits auxiliary information for masking~\cite{shi2022adversarial,li2022semmae, kakogeorgiou2022hide,li2021mst}. To this end, one may take advantage of the recent development of causal discovery \cite{huang2022latent,xie2022identification} to identify the latent structure. %More structured masking design is needed to learn high-level representations. \cite{shi2022adversarial,li2022semmae} and \cite{kakogeorgiou2022hide} propose to leverage auxiliary information for masking. 
%suggests a principled masking scheme can be devised based on the latent causal structure. 

Lastly, if low-level information is preferable for downstream tasks, an extremely high masking ratio can retain such information and is more computationally efficient than its low masking ratio counterpart. 

\section{Related work}

\subsection{Masked Autoencoders}
Masked image modeling (MIM)~\cite{baevski2022data2vec,anandkumar2013learning,chen2020generative,xie2022simmim,he2021masked,zhou2021ibot,bao2021beit,chen2022context,li2021mst} has been gaining momentum recently due to their sota-of-the-art performances over many downstream tasks. 
The pretraining objective is simple in its basic form: the model is tasked to predict the masked-out image pixels with the information of the unmasked part.
Despite the simplicity of the task, many intriguing properties have been observed on MIM that escape rigorous analysis.
For instance, small masking ratios and masking patch sizes are empirically shown detrimental to downstream tasks like classification~\cite{he2021masked,hu2022exploring}.
It is hypothesized that aggressive masking forces to model to leverage more global information, rather than local interpolation~\cite{he2021masked}.
However, whether such intuition is theoretically justifiable remains elusive. 
In this work, we provide theoretical verification of such intuitions and further derive insights into MAE's empirical behavior.
% Understanding such intuition rigorously is not only a guarantee for the current approach, but also provides insights for principled empirical performances.
% Providing such a theoretical framework is the goal of this work.
% In addition, observations that representations of MIM models are less linearly separable in contrast to contrastive learning approaches 

\subsection{Theoretical Understanding of MAE}
Despite the prominent success of MAE, only a limited number of papers are dedicated to understanding its underlying mechanism in a principled manner~\cite{lee2021predicting,cao2022understand,pan2022towards,zhang2022mask}.
Lee et al.~\cite{lee2021predicting} establish the connection between the inpainting pretraining task and downstream tasks by assuming that the downstream task target captures the statistical dependency between the visible part and the masked part in the inpainting.
Under this assumption, they show that the sampling complexity of the downstream task can be largely reduced by pretraining.
Cao et al.~\cite{cao2022understand} inquire into the interactions between the transformer architecture and the MAE representation, highlighting the critical role of the attention mechanism in the success of MAE.
Pan et al.~\cite{pan2022towards} make a multi-view assumption on the samples, showing that MAE can extract class-relevant semantics with shallow convolutional models.
Zhang et al.~\cite{zhang2022mask} study masking through the data-augmentation perspective and employ the augmentation graph~\cite{haochen2021provable} to illustrate the impact of masking on downstream task performance.
In contrast, our work employs the hierarchical latent variable model, which lets us directly examine the relationship between the masking operation and the learned representations. 
Also, our theoretical guarantee is on the statistical identifiability of the true data-generating process rather than the statistical/optimization complexities as in most prior work.

\subsection{Identifiability Guarantees for Nonlinear Latent-variable Models}
In unsupervised learning, identifiability means latent variables involved in the underlying data-generating process can be estimated from observational data. This is critical to tasks like feature disentanglement~\cite{higgins2017betavae,Karras_2019_CVPR,burgess2018understanding,huang2018multimodal,xie2022identification} in the image generation community.
However, principled disentanglement in the non-linear regime is challenging and even proved impossible without additional assumptions on the data-generating process~\cite{locatello2019challenging}.
Recent advances in independent component analysis (ICA)~\cite{comon1994independent,bell1995information,ICAbook} obtain identifiability in the non-linear regime by imposing additional constraints on either the latent variable distribution or the function class variables~\cite{hyvarinen2019nonlinear, sorrenson2020disentanglement, halva2020hidden, lachapelle2021disentanglement, khemakhem2020variational, von2021self, kong2022partial, zheng2022identifiability, lyu2021latent}.
Most relevant to ours are the identifiability theories in \cite{von2021self,lyu2021latent} in which similar latent causal models (Figure~\ref{fig:cs_model}) are studied.
Specifically, our model allows the generating functions $g_{\mm} \neq g_{\mm^{c}} $ to be distinct (cf. identical functions assumed in \cite{von2021self}) and statistical dependence between $ \cc $ and $ \sss_{\mm^{c}} $ (cf. independence assumed in \cite{lyu2022understanding}).
Additionally, both works~\cite{von2021self,lyu2022understanding} focus on contrastive learning with data augmentation, while our subject is MAE.

\section{Conclusion}

In this work, we formulate the data-generating process as a hierarchical latent variable model and provide guarantees that MAE can identify the true variables in such a hierarchical latent model.
We then show how different masking ratios and patch sizes determine the set of true latent variables to be recovered, which influences the representation abstractions learned in MAE. Empirically, we show that non-extreme masking ratios or patch sizes often capture succinct and robust high-level information, while extreme masking ratios capture more low-level information. 

% Our theory offers coherent explanations of existing empirical observations and provides insights for potential empirical improvements and fundamental limitations of the masked-reconstruction paradigm.

% In this work, we formally justify and characterize existing empirical insights and provide theoretical guarantees of MAE.
% We formulate the underlying data generating process as a hierarchical latent variable model, %where higher-level variables correspond to higher-level vision attributes such as semantic or class information, and lower-level variables correspond to low-level vision information such as texture.
% and show that under reasonable assumptions, MAE provably identifies a set of latent variables in the hierarchical model, explaining why MAE can extract high-level information from pixels.
% Further, we show how key hyperparameters in MAE, such as the masking ratio and the patch size, determine which true latent variables to be recovered, therefore influencing the level of semantic information in the representation. 
% Specifically, extremely large or small masking ratios inevitably lead to low-level representations.
% Our theory offers coherent explanations of existing empirical observations and provides insights for potential empirical improvements and fundamental limitations of the masked-reconstruction paradigm.
% We conduct extensive experiments to validate our theoretical insights

% Specifically, extreme masking schemes lead to low-level representations. 

\vspace{-.5em}
\paragraph{Acknowledgement} \small{We thank the Google TRC program for the TPU Research Cloud support, Ronghang Hu and Xinlei Chen for the MAE TPU code, Biwei Huang for technical discussions, Tao Lin for feedback on the manuscript, and anonymous reviewers for valuable feedback. 
The work of LK and YC is supported in part by NSF under the grants CCF-1901199 and DMS-2134080, and by ONR under the grant N00014-19-1-2404.
The work of MM and LP is partially supported by BMW, National Science Foundation awards 1722822 and 1750439, and National Institutes of Health awards R01MH125740, R01MH096951, R21MH130767 and R01MH132225. 
% Not sure
This project is also partially supported by the National Institutes of Health (NIH) under Contract
R01HL159805, by the NSF-Convergence Accelerator Track-D award 2134901, by a grant from
Apple Inc., a grant from KDDI Research Inc, and generous gifts from Salesforce Inc., Microsoft
Research, and Amazon Research. }

% Add to Arxiv version
% Any opinions, findings, conclusions, or recommendations expressed in this material are those of the author(s) and do not necessarily reflect the views of the sponsors, and no official endorsement should be inferred.

%%%%%%%%% REFERENCES
{\small
\bibliographystyle{ieee_fullname}
\bibliography{cvpr}
}
\clearpage
%%% 
\appendix

\section{Proof for Theorem~\ref{thm:locating_c}} \label{sec:proof_theorem_graph}
In this section, we provide the proof for Theorem~\ref{thm:locating_c}. 

\locatec*

\begin{algorithm}[H]
	\begin{algorithmic}[1]
        \myState{\textbf{inputs}: The hierarchical graph structure $\mG$, and the partitioned observables $ \cX_{\mm} $, $ \cX_{\mm^{c}} $. }
        \myState{ $ \cC, \cS_{\mm} \leftarrow \emptyset, \emptyset $. }
        \myState{\textbf{Selection stage:}}
        \For{ $\xx \in \cX_{\mm}$}
            \myState{ $ \cZ \leftarrow \{ \xx \} $. }
            \While{ $ \cZ \neq \emptyset $ }
                \myState{ $\cZ, \cE \leftarrow \text{LocateParents} ( \cZ ) $}
                \myState{ $ \cS_{\mm} \leftarrow \cS_{\mm} \cup \cE $ }
                \For{ $\pp \in \cZ$ }
                    \If{ $ \pp \in \text{Ancestors} ( \xx_{\mm^{c}} ) $ }
                        \myState{ $ \cC \leftarrow \cC \cup \{ \pp \} $ }
                        \myState{ $ \cZ \leftarrow \cZ \setminus \{ \pp \} $ }
                    % \ElsIf{}
                    %     \myState{}
                    \EndIf
                \EndFor
            \EndWhile
        \EndFor
        \myState{\textbf{Pruning stage:}}
        \For{ $ \dd \in \cC $ }
            \For{ $ \dd' \in \cC \setminus \{ \dd \}$ }
                \If{ $ \dd' \in \text{DirectedPaths} (\dd, \cX_{\mm^{c}}) $ }
                    \myState{$ \cC \leftarrow \cC \setminus \{ \dd \} $}
                \EndIf
            \EndFor
        % \If{ for every directed path from $\dd $ to $ \cX_{\mm^{c}} $, there exists $ \dd' \in \cC $, such that $\dd' \neq \dd $ and $ \dd' $ sits on the directed path}
        %     \myState{
        %         % Include its parents to block the path: $ \cC \leftarrow \cC \cup \text{Parents} ( \cc )  $. \iLJ{to be checked}
        %         $ \cC \leftarrow \cC \setminus \{ \dd \} $ 
        %     }
        % \EndIf
        \EndFor
    \Return{ $\cC$, $ \cS_{\mm} $ }
	\end{algorithmic}

	\mycaptionof{algorithm}{
        \footnotesize
        \textbf{Search for the minimal $\cc$ and $\sss_{\mm}$.}
        % For ease of presentation, we use sets $\cC$ and $\cS_{\mm}$ in the algorithm.
        $\cc$ and $\sss_{\mm}$ discussed in text can be viewed as the concatenations of vectors in $\cC$ and $\cS_{\mm}$.
        $\text{LocateParents}(\cdot)$ pins down the locations $\cZ$'s parents (including exogenous variables) in the graph.
        $\text{DirectedPaths} (\dd, \cX_{\mm}^{c}) $ returns the set of variables on the directed paths between $\dd$ and $ \cX_{\mm}^{c} $.
        % we denote as $\cM$ a buffer set that stores the variables that have been passed in the current layer;
        % we denote as $\cL$ a buffer set that stores the content variables estimated from the current one;
        % we denote as $\text{NotStop}$ the indicator that some variables are still dependent at the current layer and the algorithm should continue to the next layer.
        }
	\label{alg:locate_c}
\end{algorithm}

\begin{algorithm}[H]
	\begin{algorithmic}[1]
        \myState{\textbf{inputs}: The hierarchical graph structure $ \mG $, the partitioned observables $ \cX_{\mm} $, $ \cX_{\mm^{c}} $, and $\cC$ returned by Algorithm~\ref{alg:locate_c}. }
        \myState{ $ \cS_{\mm^{c}} \leftarrow \emptyset $. }
        \For{ $\xx \in \cX_{\mm^{c}}$}
            \myState{ $ \cP, \cP' \leftarrow \{ \xx \}, \emptyset $. }
            \While{ $ \cP \neq \emptyset $ }
                % \myState{ $\_, \cE \leftarrow \text{Parents} ( \cP )$}
                % \myState{ $ \cS_{\mm^{c}} \leftarrow \cS_{\mm^{c}} \cup \cE $ }
                \For{ $\pp \in \cP$ }
                    \For{ $ \pp' \in \text{LocateParents} (\pp)$ }
                        \If{ $ \pp' $ is exogenous }
                            \myState{ $ \cS_{\mm^{c}} \leftarrow \cS_{\mm^{c}} \cup \{ \pp' \} $ }
                        \ElsIf{ $ \pp' \in \cC $ }
                            \myState{
                                $ \cS_{\mm^{c}} \leftarrow \cS_{\mm^{c}} \cup ( \text{LocateParents} (\pp) \setminus \{ \pp' \} ) $
                            }
                        \Else
                            \myState{ 
                                $ \cP' \leftarrow \cP' \cup \{ \pp'\} $
                             }
                        \EndIf
                    \EndFor
                \EndFor
                \myState{ $\cP \leftarrow \cP'$ }
            \EndWhile
        \EndFor
        \Return{ $ \cS_{\mm^{c}} $ }
	\end{algorithmic}

	\mycaptionof{algorithm}{
	    \textbf{Search for $\sss_{\mm^{c}}$ given $\cC$.}
        \small
        $\text{LocateParents}(\pp)$ pins down the locations $\pp$'s parents (including exogenous variables) in the graph.
        % $\cP$ contains endogenous in the hierarchical graph and $\cE$ contains exogenous variables.
        % we denote as $\cM$ a buffer set that stores the variables that have been passed in the current layer;
        % we denote as $\cL$ a buffer set that stores the content variables estimated from the current one;
        % we denote as $\cC$ the set that stores all the shared latent variables discovered so far;
        % we denote as $\text{NotStop}$ the indicator that some variables are still dependent at the current layer and the algorithm should continue to the next layer.
        }
	\label{alg:locate_smc}
\end{algorithm}

% \begin{theorem}~\label{thm:critical_layer} \iLJ{To be shown!!!}
%     (MAE learns a specific layer): Under the model~(Equation~\ref{eq:generating_process}), Assumption~\ref{assumption:invertibility}, and Assumption~\ref{assumption:bottleneck_distribution}, a pair $(f, g)$ that maps between the layer $ L_{\text{B}}^{*} := \argmax_{l} \cP_{ l_{ \text{B} } } ( r, s ) $ and the observation layer of $\xx$  minimizes the MAE object.
% \end{theorem}

\begin{proof}
    We will show that Algorithm~\ref{alg:locate_c} returns the minimal set of variables that satisfy all conditions in Theorem~\ref{thm:locating_c}, which implies its existence.
    We will then argue that such $\cC$ is unique for a specific mask $\mm$.

    % \paragraph{Condition 3:}
    % Note that the search procedure automatically fulfills condition 3, as it only selects the variables from the intersection of the two ancestor sets.

    \paragraph{Condition 1:}
    We first discuss the invertibility of $g_{\xx_{\mm}}$. 
    Due to the invertibility assumption of the generating process, each backtrack step in Algorithm~\ref{alg:locate_c} is invertible (lossless).
    Thus, before the pruning stage, the mapping between $(\cC, \cS_{\mm})$ and $ \cX_{\mm} $ is invertible, as the information of $\cX_{\mm}$ is either stored in either $\cC$ or $ \cS_{\mm} $.
    We now show that the pruning stage does not break this invertibility.
    To see this, we note that for each $ \cc $ that is removed in the pruning stage, there exists $ \cc' \in \cC $ on the directed path from $ \cc $ to $ \cX_{\mm^{c}} $ (per Algorithm~\ref{alg:locate_c}).
    Therefore, $ \cc $ is a parent/ancestor of $ \cc' $ and can thus be retrieved by backtracking from $ \cc' $ thanks to the invertibility of the generating process.
    Therefore, the mapping between $(\cC, \cS_{\mm})$ and $ \cX_{\mm} $ is invertible.

    We now address the invertibility of $g_{\xx_{\mm^{c}}}$, i.e., the mapping between $(\cC, \cS_{\mm^{c}})$ and $ \cX_{\mm^{c}} $.
    % and the construction in Algorithm~\ref{alg:locate_c}, it follows the invertibility of $ g_{\mm} $, as each of the backtrack steps $ \text{Parents} (\cdot) $ is invertible and $ (\cc, \sss_{\mm}) $ contains all the information of the backtracking from $ \xx_{\mm} $.
    We observe that a similar argument applies: Algorithm~\ref{alg:locate_smc} dictates that the latent variables from the backtracking from $ \cX_{\mm^{c}} $ are either stored in either $ \cC $ or $ \cS_{\mm^{c}} $. It follows that $ g_{\xx_{\mm^{c}}} $ is invertible.

    \paragraph{Condition 2:}
    We show that $(\cc, \sss_{\mm}, \sss_{\mm^{c}})$ returned by Algorithm~\ref{alg:locate_c} and Algorithm~\ref{alg:locate_smc} satisfies Condition 2 by contradiction.
    We suppose that $ \sss_{\mm} \not\ind (\cc, \sss_{\mm^{c}})$.
    Then it implied that $ \exists \dd \in ( \cc, \sss_{\mm^{c}} )$, $ \exists \bm\varepsilon \in \sss_{\mm} $, such that $ \dd \in \text{Descendants} (\bm\varepsilon) $.
    More precisely, it followed that there was a directed path that started from $ \bm\varepsilon $ and ended at $ \dd $, and a child of $ \bm\varepsilon $, denoted as $ \bm\delta $, was located on this path.
    If $\dd \not\in \text{Descendants} (\bm\varepsilon) $, there would be no directed paths from $ \bm\varepsilon $ to $\dd$ and thus at least one V-structure would sit on each path between $ \bm\varepsilon $ and $\dd$ that blocked the path. 
    According to Algorithm~\ref{alg:locate_c}, as $ \bm\varepsilon \in \sss_{\mm} $, it implied that $ \bm\delta \not\in \cc$ and $ \bm\delta \not\in \text{Ancestors} (\xx_{\mm}) \cap \text{Ancestors} (\xx_{\mm^{c}}) $.

    We first investigate the case where $ \dd \in \cc $, i.e., $ \sss_{\mm} \not\ind \cc $.
    The fact that $\dd \in \cc$ implied that $ \dd \in \text{Ancestors} (\xx_{\mm}) \cap \text{Ancestors} (\xx_{\mm^{c}}) $ which further implied that $ \bm\delta \in \text{Ancestors} (\xx_{\mm}) \cap \text{Ancestors} (\xx_{\mm^{c}}) $ as $ \bm\delta $ was an ancestor of $ \dd $.
    Therefore, we have arrived at a contraction to the observation that $ \bm\delta \in\text{Ancestors} (\xx_{\mm}) \cap \text{Ancestors} (\xx_{\mm^{c}}) $.

    We now discuss the scenario where $ \dd \in \sss_{\mm^{c}} $.
    By design, Algorithm~\ref{alg:locate_smc} ensures that $ \sss_{\mm^{c}} $ contains two types of latent variables, exogenous variables and a spouse of latent variables in $ \cc $.
    As $\sss_{\mm}$ consists solely of exogenous variables and exogenous variables are independent mutually, it could only be the case that $ \dd $ was a spouse of a latent variable in $ \cc $.
    By Algorithm~\ref{alg:locate_c}, there would be a directed path from $\bm\delta$ to $ \xx_{\mm} $.
    Also, Algorithm~\ref{alg:locate_smc} ensured that $ \dd $ lied on a path directed to $ \xx_{\mm^{c}} $.  
    As there existed a directed path from $ \bm\delta $ to $ \dd $, there must exist a directed path from $ \bm\delta $ to $ \xx_{\mm^{c}} $.
    Therefore, $ \bm\delta \in \text{Ancestors} (\xx_{\mm}) \cap \text{Ancestors} (\xx_{\mm^{c}}) $ which contradicts the fact established above.
    
    Therefore, these contradiction implies that $ \sss_{\mm} \ind (\cc, \sss_{\mm^{c}}) $.

    So far, we have shown that Algorithm~\ref{alg:locate_c} and Algorithm~\ref{alg:locate_smc} yield $ (\cc, \sss_{\mm}, \sss_{\mm^{c}}) $ that fulfills the conditions of Figure~\ref{fig:cs_model}.
    In the following, we show that $ (\cc, \sss_{\mm}) $ is the minimal solution and is unique.

    % By construction (Algorithm~\ref{alg:locate_c}), $\sss_{\mm}$ contains only certain exogenous variables involved in generating $ \xx_{\mm} $.

    % As $ \sss_{\mm} $ consists of only the non-root exogenous variables and $\cc$ consists of either endogenous variables or root variables and crucially $\sss_{\mm}$ does not contain parents/ancestors of $\cc$, it follows that $ \sss_{\mm} \ind \cc $ which is condition 1.

    % For condition 2, given the hierarchical structure, all paths connecting two nodes $\xx \in \cX_{\mm}$ and $ \xx' \in \cX_{\mm^{c}} $ can categorized into two cases:
    % \begin{itemize}
    %     \item case 1: path $ \cP $ contains at least one variable $ \zz_{p} $ such that $ \zz_{p} \in \text{Ancestors} (\xx_{\mm}) \cap \text{Ancestors} (\xx_{\mm^{c}}) $;
    %     \item case 2: path $ \cP $ does not contain such nodes.  
    % \end{itemize}
    % For case 1, $ \cP $ is blocked by a node in $\cc$, as Algorithm~\ref{alg:locate_c} selects the lowest possible in the 
    % For case 2, $ \cP $ would have to have pass through 

    \paragraph{Uniqueness and minimality of $(\cc, \sss_{\mm})$:}
    We now reason about that given the mask and the hierarchical structure, $ (\cc, \sss_{\mm}) $ returned by Algorithm~\ref{alg:locate_c} is the set of minimal dimensionality that can fulfill the conditions, and such a minimal set is unique.

    By construction, Algorithm~\ref{alg:locate_c} ensures that for each $ \cc \in \cC $ there exists an undirected path that is made up of a directed path from $\cc$ to the masked variable $ \xx_{\mm} $ and a directed path from $\cc$ to the unmasked variable $ \xx_{\mm^{c}} $ and no other $ \cc' \in \cC $ sits on this entire undirected path.
    To see this, there must exist a directed path from $\cc$ to $ \xx_{\mm} $ without any other $ \cc' \in \cC$ on it, otherwise $ \cc $ would not be placed in $ \cC $ in Algorithm~\ref{alg:locate_c}.
    In addition, the pruning stage of Algorithm~\ref{alg:locate_c} mandates that there must exist $ \xx_{\mm^{c}} $ such that the path from $ \cc $ to $ \xx_{\mm^{c}} $ does not contain other $ \cc' \in \cC $.
    We note that $ \cc $ chosen by Algorithm~\ref{alg:locate_c} is the variable with the smallest possible dimension to block such a path, as it resides on the highest level compared to other variables on the path and the variable dimension increases monotonically along directed paths.

    Therefore, the choice of each $ \cc $ is minimal, and such a choice is unique.
    As $ \cS_{\mm} $ is the set of exogenous variables necessary for $\cC$ to restore $ \cX_{\mm} $, the selection of $ \cS_{\mm} $ is also unique.
    Hence, we conclude that the $ (\cC, \cS_{\mm}) $ returned by Algorithm~\ref{alg:locate_c} is the minimal choice and is unique.
    
    % By construction, Algorithm~\ref{alg:locate_c} places a blocking variable $\cc \in \cC $ on each (undirected) path between $ \xx_{\mm} $ and $ \xx_{\mm^{c}} $.
    % Suppose that the dimensionality of $ \cc $ on this path could be further reduced, then we would select a variable on $ \cc $ up.
    % As guaranteed by Algorithm~\ref{alg:locate_c}, there was no other $ \cc' \in \cC$ that sat on the directed path between $ \cc $ and $ \cX_{\mm^{c}} $.
    % Hence, moving $ \cc $ up will 

    % For each $ \xx \in \cX_{\mm} $ (the outermost for loop), it is obvious that blocking the chosen variables is the minimal solution that satisfies Condition 1 and 2 -- blocking a variable with a smaller dimension (i.e.\ higher in the hierarchy) would cause $ \xx_{\mm} \not\ind \xx_{\mm^{c}} | \cc $ and therefore $ \sss_{\mm} \not\ind \sss_{\mm^{c}} $.

    % As the choice for each $ \xx \in \cX_{\mm} $ results in the minimal solution.
    % We now show that the union of the chosen variables for all $ \xx \in \cX_{\mm} $ is still minimal, that is, removing any $ \cc $ from $ \cC $ would violate the desired conditions.
    % To see this, consider any $ \xx \in \cX_{\mm} $, suppose that one blocking variable chosen for it, denoted as $ \cc $

    % So far, we have concluded that

    % We show that joining the separation sets for each node does not result in redundant nodes and thus preserves the minimality of the selection.

\end{proof}

\section{Identifiability proof}
\label{sec:identifiability_proof}
In this section, we present the proof for Theorem~\ref{thm:mae_identifiability}.
We first give a general identifiability theory (i.e., Theorem~\ref{thm:generative_identifiability}) for the generating process in Figure~\ref{fig:cs_model} and then make the connection to the proof of Theorem~\ref{thm:mae_identifiability}.

\begin{theorem} \label{thm:generative_identifiability}
    The generating process in Figure~\ref{fig:cs_model} is defined as follows:
    \begin{align} \label{eq:general_invertible_process}
        [\vv_{1}, \vv_{2}] &= g( \cc, \sss_{1}, \sss_{2} ), \\
        \vv_{1} & = g_{1} (\cc, \sss_{1}), \\
        \vv_{2} & = g_{2} (\cc, \sss_{2}),
    \end{align}
    where $ \cc \in \cC \subset \R^{d_{c}} $, $ \sss_{1} \in \cS \subset \R^{d_{s_{1}}} $, and $ \sss_{2} \in \cS_{2} \subset \R^{d_{s_{2}}}$. 
    Both $g_{1}$ and $g_{2}$ are smooth and have non-singular Jacobian matrices almost anywhere, and $g$ is invertible.
    % $ \zz_{1} := [\cc, \sss_{1}] \sim p(\zz_{1}) $ and $ \sss_{2} \sim p(\sss_{2}) $.
    
    % We have the following assumptions:
    % \begin{itemize}
    %     \item (Invertibility): $g$ is invertible.
    %     % \item (Distribution regularity): $(\cc, \sss_{1}, \sss_{2})$ has a smooth and continuous density $ p_{\cc, \sss_{1}, \sss_{2}} > 0 $ on $ \cC \times \cS \times \cS $.
    %     % \item (Independent distribution regularity): $ \sss_{2} \perp \zz_{1} $ and $ p_{\sss_{2}} $ is smooth and continuous and $ p_{\sss_{2}} > 0 $ almost everywhere.
    %     \item (Linear independence for $g_{1}$ and $g_{2}$): At any $ (\cc, \sss_{1}) \in \cC \times \cS \setminus \cE$ where $\cE$ denotes any subset with zero measure, there are at least $ d_{c} + d_{s_{1}} $ values of $i$, such that the corresponding vectors $ [ \frac{ \partial g_{1, i} }{ \partial z_{1, 1} } (\cc, \sss_{1}), \dots, \frac{ \partial g_{1, i} }{ \partial z_{1, d_{c}+d_{s_{1}}} } (\cc, \sss_{1}) ]$ are linearly independent, where $ \zz_{1} := (\cc, \sss_{1}) $.
    %     The same is assumed for $g_{2}$.
    % \end{itemize}
    % If for given dimensions $ (d_{c}, d_{s_1}, d_{s_2}) $, a generative model $( \hat{p}_{\cc, \sss_{1}, \sss_{2}}, \hat{g}_{1}, \hat{g}_{2}, \hat{g} )$ assume the generating process in Equation~\ref{eq:general_invertible_process}, satisfies the above assumptions, and matches the joint distribution: 
    % \begin{align} \label{eq:matched_distribution_2}
    %     p_{\vv_{1}, \vv_{2}}( \vv_{1}, \vv_{2} ) = \hat{p}_{\vv_{1}, \vv_{2}} (\vv_{1}, \vv_{2}), \forall (\vv_{1}, \vv_{2}) \in \cV \times \cV,
    % \end{align}
    If $ \hat{g}_{1}: \cZ \to \cV_{1} $ and $ \hat{g}_{2}: \cZ \to \cV_{2} $ assume the generating process of the true model $(g_{1}, g_{2})$ and match the joint distribution $p_{\vv_{1}, \vv_{2}}$, 
    then there is a one-to-one mapping between the estimate $\hat{\cc}$ and the ground truth $\cc$ over $ \cC \times \cS \times \cS $, that is, $\cc$ is block-identifiable.
\end{theorem}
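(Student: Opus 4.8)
The plan is to reduce the block-identifiability claim to a statement about the reparametrization map between the true and estimated latents, and then show that its content block factors through $\cc$ alone and is invertible. Since both the true full generator $g$ and the estimated full generator (the map sending $(\hat{\cc},\hat{\sss}_{1},\hat{\sss}_{2})$ to $(\vv_{1},\vv_{2})$ through $\hat{g}_{1},\hat{g}_{2}$ with a single shared $\hat{\cc}$) are invertible and push their latent laws onto the common observed law $p_{\vv_{1},\vv_{2}}$, I would first define the indeterminacy map $h$ as the composition of the true generator with the inverse of the estimated generator. This $h$ is an invertible map $(\cc,\sss_{1},\sss_{2})\mapsto(\hat{\cc},\hat{\sss}_{1},\hat{\sss}_{2})$ under which the two models produce identical observations, $\hat{\vv}_{1}=\vv_{1}$ and $\hat{\vv}_{2}=\vv_{2}$. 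Writing $h_{c}$ for the block of $h$ that outputs $\hat{\cc}$, the theorem is exactly the assertion that $h_{c}$ depends only on $\cc$ and is one-to-one.

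The core step is to localize the content information to each view. Because $\hat{g}_{1}$ has a non-singular Jacobian, the inverse function theorem lets me express $(\hat{\cc},\hat{\sss}_{1})$ locally as a smooth function of $\hat{\vv}_{1}=\vv_{1}=g_{1}(\cc,\sss_{1})$, so $\hat{\cc}=\phi(\cc,\sss_{1})$ for some $\phi$ not involving $\sss_{2}$; the symmetric argument through $\hat{g}_{2}$ and $\vv_{2}=g_{2}(\cc,\sss_{2})$ gives $\hat{\cc}=\psi(\cc,\sss_{2})$ with $\psi$ not involving $\sss_{1}$. At any point of the support the two expressions coincide, $\phi(\cc,\sss_{1})=\psi(\cc,\sss_{2})$. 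Here is where the independence $\sss_{1}\ind(\cc,\sss_{2})$ from Figure~\ref{fig:cs_model} does the heavy lifting: it forces the support of $(\cc,\sss_{1},\sss_{2})$ to be the product $\mathrm{supp}(\sss_{1})\times\mathrm{supp}(\cc,\sss_{2})$, so that for a fixed $\cc$ I may hold $\sss_{2}$ fixed and vary $\sss_{1}$ freely without leaving the support. Doing so keeps the right-hand side $\psi(\cc,\sss_{2})$ constant while the left-hand side ranges over $\phi(\cc,\cdot)$; hence $\phi(\cc,\cdot)$ is constant in $\sss_{1}$, and $\hat{\cc}=h_{c}(\cc)$ depends on $\cc$ only.

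Finally, to upgrade $\hat{\cc}=h_{c}(\cc)$ to a one-to-one correspondence I would argue in two ways. Since $h$ is invertible its Jacobian is non-singular almost everywhere, and the rows of this Jacobian corresponding to $\hat{\cc}$ are $(\partial_{\cc}h_{c},\,\mathbf{0},\,\mathbf{0})$; non-singularity of the full Jacobian then forces $\partial_{\cc}h_{c}$ to have full rank $d_{c}$, so $h_{c}$ is a local diffeomorphism. Moreover, the entire construction is symmetric between the true and estimated models, as both obey the shared-content structure and the independence assumption, so running the same argument with their roles reversed yields $\cc=h_{c}'(\hat{\cc})$; composing the two relations gives $h_{c}'\circ h_{c}=\mathrm{id}$ and $h_{c}\circ h_{c}'=\mathrm{id}$, establishing that $h_{c}$ is a bijection, i.e. $\cc$ is block-identifiable. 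The connection to Theorem~\ref{thm:mae_identifiability} then follows by identifying $(\vv_{1},\vv_{2})$ with $(\xx_{\mm},\xx_{\mm^{c}})$, the generators $(g_{1},g_{2})$ with $(g_{\xx_{\mm}},g_{\xx_{\mm^{c}}})$ located in Theorem~\ref{thm:locating_c}, and the MAE encoder output with $\hat{\cc}$.

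I expect the main obstacle to be precisely the passage from the \emph{local} functional-dependence statements to a \emph{global} conclusion on the support. Because $g_{1}$ and $g_{2}$ are only assumed to have non-singular Jacobians almost everywhere rather than to be globally invertible, the expressions $\phi$ and $\psi$ and the identity $\phi=\psi$ are a priori only local, and they must be stitched together over a connected product support; the independence $\sss_{1}\ind(\cc,\sss_{2})$ is exactly what guarantees the product structure that makes the fix-one-vary-the-other argument legitimate, and handling it with the appropriate measure-theoretic care (and a connectedness hypothesis on the support) is the delicate part of the proof.
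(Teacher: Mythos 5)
Your proposal is correct and follows essentially the same route as the paper's proof: both define the indeterminacy map $h=\hat{g}^{-1}\circ g$, use the non-singular Jacobians of the view-specific generators to show that $\hat{\cc}$ depends on $(\cc,\sss_{1})$ only and on $(\cc,\sss_{2})$ only (the paper does this by differentiating w.r.t.\ $\sss_{2}$ and inverting the resulting linear system, which is just the explicit form of your inverse-function-theorem step), and then invoke the symmetry between the true and estimated quantities to upgrade $\hat{\cc}=h_{c}(\cc)$ to a bijection. Your closing concern about stitching local statements into a global one over a connected product support applies equally to the paper's own argument, which likewise passes from vanishing partial derivatives to global functional independence without further comment.
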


\begin{proof}
    For $(\vv_{1}, \vv_{2}) \sim p_{\vv_{1}, \vv_{2}} $, because of the matched joint distribution, we have the following relations between the true variables $ (\cc, \sss_{1}, \sss_{2}) $ and the estimated ones $ (\hat{\cc}, \hat{\sss}_{1}, \hat{\sss}_{2}) $:
    \begin{align}
        \vv_{1} &= g_{1} ( \cc, \sss_{1} ) = \hat{g}_{1} ( \hat{\cc}, \hat{\sss}_{1} ), \label{eq:v1_generation_1}\\
        \vv_{2} &= g_{2} ( \cc, \sss_{2} ) = \hat{g}_{2} ( \hat{\cc}, \hat{\sss}_{2} ) , \label{eq:v2_generation_1} \\
        (\hat{\cc}, \hat{\sss}_{1}, \hat{\sss}_{2}) &= \hat{g}^{-1} ( \vv_{1}, \vv_{2} ) = \hat{g}^{-1} ( g (\cc, \sss_{1}, \sss_{2}) ) := h(\cc, \sss_{1}, \sss_{2}), \label{eq:h_definition_1} 
    \end{align}
    where we define the smooth and invertible function $ h:= \hat{g}^{-1} \circ g $ that transforms the true variables $ (\cc, \sss_{1}, \sss_{2}) $ to estimates $ (\hat{\cc}, \hat{\sss}_{1}, \hat{\sss}_{2}) $.

    Plugging Equation~\ref{eq:h_definition_1} into Equation~\ref{eq:v1_generation_1} yields the following:
    \begin{align*}
        g_{1} ( \cc, \sss_{1} ) = \hat{g}_{1} ( h_{c, s_{1}} (\cc, \sss_{1}, \sss_{2}) ).
    \end{align*}
    For $ i \in \{1, \dots, d_{v_{1}} \} $ and ($ j \in \{ 1,\dots, d_{s_{2}}\}$), taking partial derivative of the $ i $-th dimension of both sides w.r.t. $ s_{2, j} $:
    \begin{align*} 
        0 = \frac{ \partial g_{1, i} ( \cc, \sss_{1} )}{ \partial s_{2, j} } = \frac{ \partial \hat{g}_{1, i} ( h_{c, s_{1}} ( \cc, \sss_{1}, \sss_{2}) ) }{ \partial  s_{2, j} }.
    \end{align*}
    The equation equals zero because there is no $ s_{2, j} $ in the left-hand side of the equation.
    Expanding the derivative on the right-hand side gives:
    \begin{align}
        \sum_{k \in \{ 1, \dots, d_{c} + d_{s_{1}} \} } \frac{ \partial \hat{g}_{1, i} }{ \partial h_{(c, s_{1}), k} } \cdot \frac{ \partial h_{(c, s1), k} }{ \partial {s}_{2, j} } ( {\cc}, {\sss}_{1}, {\sss}_{2} ) = 0
        \label{eq:one_equation_1}
    \end{align}
    For $(\hat{\cc}, \hat{\sss}_{1}) \in \cC \times \cS \setminus \cE_{1}$ where $\cE_{1}$ denotes some subset with zero measure, there are at least $ d_{c} + d_{s_{1}} $ values of $i$ for which vectors $ [ \frac{ \partial \hat{g}_{1, i} }{ \partial h_{(c, s_{1}), 1} } (\hat{\cc}, \hat{\sss}_{1}), \dots, \frac{ \partial \hat{g}_{1, i} }{ \partial h_{(c, s_{1}), d_{c}+d_{s_{1}}} } (\hat{\cc}, \hat{\sss}_{1}) ]$ are linearly independent, which is equivalent to the non-singular Jacobian matrix condition.
    Therefore, the $ (d_{c} + d_{s_{1}}) \times (d_{c} + d_{s_{1}}) $ linear system is invertible and the solution states that:
    \begin{align*}
        \frac{ \partial h_{(c, s_{1}), k} }{ \partial s_{2, j} } ( \cc, \sss_{1}, \sss_{2} ) = 0,
    \end{align*}
    for any $k \in \{1, \dots, d_{c} + d_{s_{1}} \}$, $j \in \{1, \dots, d_{s_{2}} \}$, and $ (\hat{\cc}, \hat{\sss}_{1}) \in \cC \times \cS \setminus \cE_{1}$.
    Therefore, we have shown that $ h_{c, s_{1}} $, i.e. $(\hat{\cc}, \hat{\sss}_{1})$, does not depend on $ {\sss}_{2} $.
    
    Applying the same reasoning to $ h_{c, s_{2}} $, we can obtain that $ h_{c, s_{2}} $, i.e. $(\hat{\cc}, \hat{\sss}_{2})$ does not depend on $ {\sss}_{1} $ on $\cC \times \cS $.
    
    Thus, for $ (\hat{\cc}, \hat{\sss}_{1}, \hat{\sss}_{2}) \in \cC \times \cS \times \cS $, we can observe that $\hat{\cc} $ does not depend on $ \sss_{1} $ and $ \sss_{2} $, that is, $ \hat{\cc} = h_{c} (\cc) $.

    Notice that in all procedures above, the roles of the true quantities $ (\cc, \sss_{1}, \sss_{2}, g, g_{1}, g_{2}) $ and the estimated quantities $ (\hat{\cc}, \hat{\sss}_{1}, \hat{\sss}_{2}, \hat{g}, \hat{g}_{1}, \hat{g}_{2}) $ are symmetric. 
    Therefore, we can switch the two sets of quantities and derive the relation: for $ (\cc, \sss_{1}, \sss_{2}) \in (\cC \times \cS \times \cS) $, $ \cc $ does not depend on $ \hat{\sss}_{1} $ and $ \hat{\sss}_{2} $, that is, $ \cc = h'_{c} (\hat{\cc}) $.

    % Thanks to the invertible mapping $ h $, the set $ \cE $ previously for estimated quantities $ (\hat{\cc}, \hat{\sss}_{1}, \hat{\sss}_{2}) $ corresponds to $ \cE'' = h^{-1} (\cE) $ for the true variables $ (\cc, \sss_{1}, \sss_{2}) $, which is also of measure zero.
    
    In sum, we have shown that on $ (\cC \times \cS \times \cS) $, there is a one-to-one mapping between $ \cc $ and $ \hat{\cc} $.

\end{proof}

We now show that Theorem~\ref{thm:mae_identifiability} follows directly from Theorem~\ref{thm:generative_identifiability}.

\globalidentifiability*

\begin{proof}
    We invoke Theorem~\ref{thm:generative_identifiability} and establish the connection between the MAE training and the estimation model in Theorem~\ref{thm:generative_identifiability}.
    In particular, we show that under Assumption~\ref{assumption:mae_model}, any solution produced by the MAE objective satisfies the conditions in Theorem~\ref{thm:generative_identifiability} and consequently is equipped with the identifiability guarantee.

    We establish the correspondence between the MAE configuration and the estimation models in Theorem~\ref{thm:generative_identifiability}:
    \begin{itemize}
        \item $ \vv_{1} \leftarrow \xx_{\mm}$;
        \item $ \vv_{2} \leftarrow \xx_{\mm^{c}} $ ;
        \item $ \hat{g}_{1} \leftarrow D_{\mm} (\cdot, \hat{\sss}_{\mm}) $;
        \item $ \hat{g}_{2} \leftarrow \tilde{g}_{\mm^{c}} $, where $ E_{\mm^{c}}(\cdot) = [\tilde{g}^{-1}_{\mm^{c}} (\cdot)]_{1:d_{c}} $.
    \end{itemize}

    We can observe that the minimizer of MAE satisfies the conditions specified in Theorem~\ref{thm:generative_identifiability}.
    This is because for the optimal solution $E_{\mm^{c}}$ of the MAE objective, we can always construct a $\tilde{g}_{\mm^{c}}$, which, together with $D_{\mm}$, matches the joint distribution $p_{\xx_{m}, \xx_{\mm^{c}}}$ and shares $\hat{\cc}$, as stipulated in Theorem~\ref{thm:generative_identifiability}.
    % \begin{enumerate}
    %     \item $ E_{\mm^{c}} (\xx_{\mm^{c}}) = [D^{-1}_{\mm} ( \xx_{\mm} )]_{1:d_{c}}, \forall (\xx_{\mm^{c}}, \xx_{\mm}) $;
    %     \item $ g_{\text{dec}} (\cdot, \mm) $ is smooth and invertible and $ f_{\mm^{c}} = [g_{ \mm^{c} }^{-1}]_{1:d_{c}} $.
    % \end{enumerate}
    % \begin{enumerate}
    %     \item $ D_{\mm} (\cdot, \sss_{\mm}) $ is smooth and has non-singular Jacobian matrix almost anywhere, and so is $ \tilde{g}_{\mm^{c}} (\cdot) $ due to its invertibility.
    %     \item The mapping $  $ formed by $ (D_{\mm} \tilde $
    % \end{enumerate}
    Thus, as shown in Theorem~\ref{thm:generative_identifiability}, there exists a one-to-one mapping between the MAE estimate $ \hat{\cc} := E_{\mm^{c}} (\xx_{\mm^{c}}) $ and the true variable $\cc$, which concludes our proof. 
\end{proof}

\section{Experimental Setup}
In this section, we provide the details of the experimental setups for our empirical results. Checkpoints and some codes are in \url{https://github.com/martinmamql/mae_understand}.

\subsection{Masked Autoencoder}
Masked Autoencoder (MAE) is an auto-encoding approach based on Vision Transformers (ViT) \cite{dosovitskiy2020image}. It consists of five steps: masking, encoding, unmasking, decoding, and reconstruction. First, an image is divided into non-overlapping patches. Then MAE samples a subset of patches and discards the remaining patches. MAE uses a hyper-parameter, masking ratio, to determine the percentage of patches to discard. For instance, if the masking ratio is $75\%$, $\frac{3}{4}$ of the patches in an image will be discarded, and only $\frac{1}{4}$ of the patches will be fed into the encoder. The sampling of patches follows a uniform distribution. Next, a ViT encoder first embeds patches using a linear projection with positional embeddings and then uses the processed embeddings to feed into transformer blocks. For decoding, MAE first re-arranges the encoded embeddings from the visible patches according to their corresponding positions in the original image and then uses a shared learned mask token to fill in the patches that are masked. Essentially, this means the input of the decoder is a combination of encoded visible patches and the mask tokens, where the positions of the mask tokens are the masked patches in the original image. The decoder is another lightweight ViT, and it processes the decoder input through transformer blocks. Lastly, the last layer of the decoder linearly projects output patches to pixels, and the pixel output is reshaped to form a reconstruction of the original image. The objective function is the mean squared error between the reconstruction and the original image. MAE has thrived because of its simple design and strong empirical performance.

In the main text, inspired by a follow-up work of MAE \cite{hu2022exploring}, we study the effect of masking by decoupling the patch size for masking images and the patch size hyperparameter in the ViT. Particularly, in the main text, we only vary the masking patch size and fix the ViT patch size at 8. Nevertheless, the original MAE \cite{he2021masked} does not decouple the two patch sizes. Therefore, for the reference of readers, in Appendix, we provide some analysis and results produced based on the patch size design from the original MAE \cite{he2021masked}, where the masking patch size and the ViT patch size are equal. We study three patch sizes: $\{8, 16, 32\}$. The experimental setup in \cite{hu2022exploring} and the setup in \cite{he2021masked} are interchangeable except for whether the patch size for the Vision Transformer varies.

\subsection{Pretraining and Linear Probing}
For pretraining MAE under different masking ratios or patch sizes, we leverage the Tensor Processing Unit (TPU) from Google Cloud. We train separate MAE models for each $($masking ratio, patch size$)$ pair, and each pretrained MAE corresponds to a unique masking ratio and patch size. We train all MAEs for $800$ epochs. Training time varies, with the shortest (patch size $= 32$) taking 18 hours on a TPU v3-128 Pod, and the longest (patch size $= 8$) taking 40 hours on a TPU v3-128 pod. The architecture follows the exact implementation from the original MAE paper \cite{he2021masked}, without any hyper-parameter tuning except masking ratio and patch size, which we study in this paper. Details of augmentation, initialization, and base learning rate scaling can be found in the Appendix section of \cite{he2021masked}, all of which we follow.

After pretraining, we also follow the original MAE work to use linear probing to evaluate the representation quality. After pretraining, we remove the projection layers and add a supervised learning classifier on frozen features of MAE encoders. The decoders are discarded during linear probing. Other details of linear probing can be found in the Appendix section of \cite{he2021masked}. We use the same hyper-parameters of linear probing as in \cite{he2021masked}.

\begin{figure*}[h]
\centering
\includegraphics[width=0.8\linewidth]{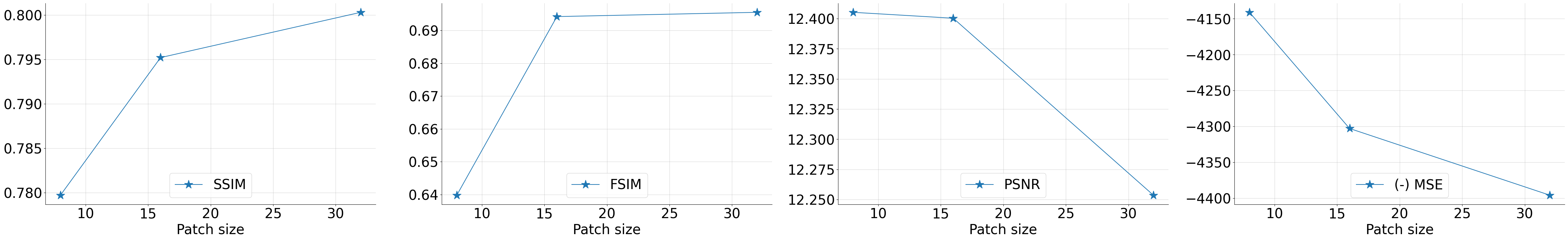}
\vspace{-.3em}
\caption{\textbf{Reconstruction evaluation} using the validation set without masking, based on two structural-level similarity metrics (SSIM and FSIM) and two pixel-level metrics (PSNR and MSE). We plot negative MSE for easier visualization. Higher SSIM and FSIM indicate high-level information is better captured, while higher PSNR and negative MSE indicates better low-level reconstruction. Here the patch size refers to the patch size in the original MAE, where the masking patch size and the patch size of ViT are equal.}
\vspace{-1em}
\label{fig:metrics_patch_size}%\vspace{.5em}
\end{figure*}

\subsection{Reconstructing high-level or low-level representations}
To perform reconstruction, we use both the encoder and the decoder from the pretrained MAEs. All samples from ImageNet-1K are passed through the encoder \textit{without} any masking, and the decoder reconstructs images in the original input space. Since no masking is applied, no masking token is applied to the input of the decoder. We use the reconstructed images and the original images to perform evaluations of four metrics: SSIM, FSIM, MSE, and PSNR. No training is performed, and the weights of the encoder and the decoder are frozen.

In Fig. \ref{fig:metrics_patch_size}, we show the reconstruction analysis using the original patch size design in MAE. Similar to the result in the main text, higher patch sizes
produce image reconstructions capturing high-level similarities better, while low patch sizes have reconstructions better on low-level metrics.

\subsection{Attention Analysis}
We follow the attention heatmap visualization in DINO \cite{caron2021emerging}, where the chosen token is the \texttt{[CLS]} token or an object-related token. We visualize the self-attention module from the last block of the MAE encoder ViT. Brighter colors suggest larger attention weights. For easier visualization, attentions below a threshold of activation scores are not shown. We use the same threshold as \cite{caron2021emerging}. For the self-attention visualization on the \texttt{[CLS]} token, we use an average of all heads in the last layer of the encoder ViT. For the self-attention visualization of the object-related token, we use the first head of the last layer of the encoder ViT, because using the average attention over all heads will result in a heatmap with much higher overall attention scores across pixels, making the visualization hard to interpret.

\begin{figure}[h]
\centering
\includegraphics[width=\linewidth]{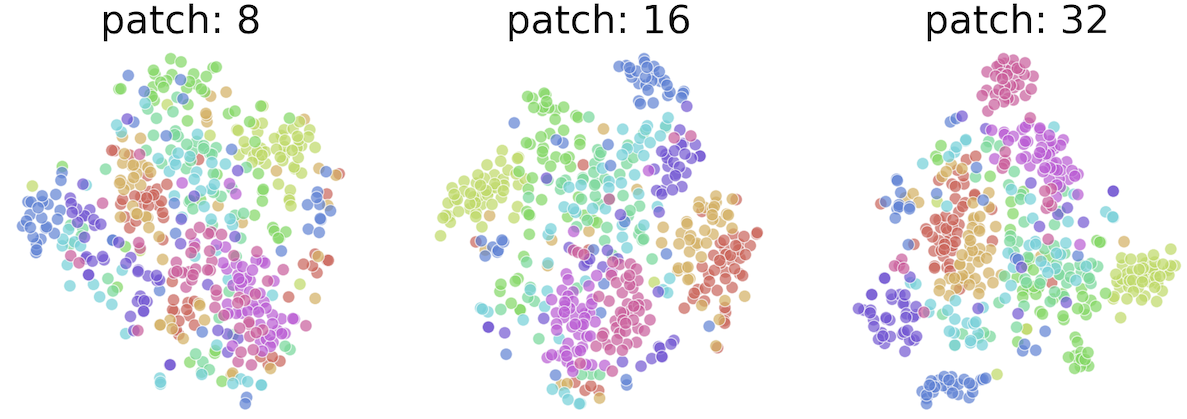}
\vspace{-.3em}
\caption{T-SNE embeddings of different MAE models under varied masking ratios and patch sizes. We fix the masking ratio at $0.75$ to change patch sizes. Each color represents one ImageNet class. The patch size refers to the patch size in the original MAE, where the masking patch size and the patch size of ViT are equal.}
\label{fig:tsne_patch_size}\vspace{.5em}
\end{figure}

\subsection{Linear separability}
To illustrate the linear separability of different MAEs under varied masking ratios or patch sizes, we sample ten random classes from ImageNet, and then use each MAE encoder to process images in the 10 classes to produce embeddings. We then project embeddings of all samples using PCA to a $50$-dimension space before t-SNE, as recommended by \cite{van2014accelerating}. For t-SNE, we use a perplexity of $20$.

In Fig. \ref{fig:tsne_patch_size}, we show the t-SNE plot using the original patch size design in MAE. Similar to the main text, embeddings are more separated in patch sizes 16 and 32 than 8, but differently, there are no significant differences between 16 and 32. Larger patch sizes generate more linearly separable embeddings in this case, although the separability seems indistinguishable for sizes 16 and 32.

For the robustness evaluation, we evaluate different variants of ImageNet validation datasets: ImageNet-v2 (INV2) \cite{shankar2020evaluating}, ImageNet-Adversarial (IN-A) \cite{hendrycks2021natural}, ImageNet-Rendition  \cite{barbu2019objectnet}, and ImageNet-Sketch (IN-S) \cite{wang2019learning}. We also include another object classification dataset,  ObjectNet (OJN) \cite{barbu2019objectnet}. ImageNet-v2 contains three new test sets with 10,000 new images each, sampled a decade after the collection of the original ImageNet dataset, and is independent of existing models to prevent overfitting. ImageNet-Adversarial consists of natural images with adversarial filtration, meaning samples that can be classified with spurious cues are removed.  Examples in ImageNet-A are harder to classify correctly and can cause mistakes across various models. ImageNet-Rendition contains renditions of ImageNet classes, such as art, cartoons, graffiti, and paintings. These examples share the same high-level object labels as ImageNet examples but differ in style and texture. ImageNet-Sketch contains black and white images of ImageNet classes, also differing in color and texture compared to original ImageNet samples. ObjectNet is a set of images captured at unusual poses in cluttered, natural scenes, which can severely degrade recognition performance. 

Note that for evaluating these datasets, no training is performed; we use the MAE encoders \textit{after} linear probings, therefore the checkpoints that are pretrained and linear-probed on ImageNet, and evaluate the checkpoints on these \textit{validation} datasets without any parameter updates.

In Table \ref{tab:robustness_patch_size}, we show the robustness analysis using the original patch size design in MAE. A moderate patch size $16$ yields the best robustness evaluation on IN-v2, OJN, IN-R, and IN-S. 
If we follow the original MAE and do not decouple masking patch size and ViT patch size, a medium patch size has stronger robustness performances than extreme patch sizes.

\begin{table}
\vspace{-.5em}
\tablestyle{3pt}{1.1}
\begin{tabular}{llllllll}
\multirow{1}{*}{mask ratio} &
\multirow{1}{*}{patch size}
& \multicolumn{1}{c}{IN1K} & \multicolumn{1}{c}{IN-v2} & \multicolumn{1}{c}{OJN} & \multicolumn{1}{c}{IN-R} & \multicolumn{1}{c}{IN-A}  & \multicolumn{1}{c}{IN-S} \\
\shline
0.75 & 8  & 62.57 & 49.17 & 13.44 & 19.42 & 3.73 &  10.73 \\
0.75 & 16 & 67.41 & 54.23 & 18.24 & 25.20 & 3.76 & 15.51 \\
0.75 & 32 & 55.51 & 42.35 & 13.46 & 18.70 & 1.89 & 9.48 \\
\end{tabular}
\vspace{-.8em}
\caption{\textbf{Accuracy ($\%$) of linear probing and robustness evaluation} on ImageNet variants and ObjectNet. We linear probe MAE via supervised training on IN1K, and then perform inference on IN1K as well as other evaluation sets.  We fix the masking ratio at $0.75$ to change patch sizes. The patch size refers to the patch size in the original MAE, where the masking patch size and the patch size of ViT are equal.}
\label{tab:robustness_patch_size}
\end{table}

\subsection{Shape bias}
The cue-conflict dataset was introduced by \cite{geirhos2018imagenet} to evaluate how much deep learning models rely on shape information for prediction, which reflects the model's robustness to spurious correlation like textures.
This dataset consists of 1280 images synthesized from 160 images of objects and 48 images of textures.
The shape accuracy is measured by the fraction of images predicted correctly by their shape.
We directly run the pretrained MAE models with linear probes trained on ImageNet-1K on the cue-conflict dataset to examine the representation resulting from MAE pretraining without any adaptation to the test dataset.

% \begin{table}[t]
% % \vspace{-1.5em}
% \tablestyle{5pt}{1.05}
% \begin{tabular}{cccccccccc}
% patch size & 8 & 16 & 32 \\
% \shline
% shape bias & TBD & 0.3618 & 0.5003 \\
% \end{tabular}
% % \vspace{-.7em}
% \caption{Shape bias \cite{geirhos2018imagenet} measurement, a higher metric indicates that the model classifies images relying on the high-level shape feature rather than the low-level texture feature. We fix the masking ratio at $0.75$ to change patch sizes. The patch size refers to the patch size in the original MAE, where the masking patch size and the patch size of ViT are equal.}
% \label{tab:shape_bias} \vspace{-1em}
% \end{table}

\subsection{Transfer learning}
\begin{table}[t]
% \vspace{-1.5em}
\tablestyle{5pt}{1.05}
\begin{tabular}{llcc}
mask ratio & patch size & AP$^\text{box}$ & {AP$^\text{mask}$}\\ 
\shline
0.75 &  8 & 34.21 & 32.28 \\
0.75 &  16 & 33.77 & 32.04 \\
0.75 &  32 & 32.39 & 30.54 \\
% \hline
% 0.75 &  8 & & \\
% 0.75 &  16 & & \\
% 0.75 &  32 & & \\

\end{tabular}
% \vspace{-.7em}
\caption{\textbf{COCO object detection and segmentation} using a ViT Mask R-CNN baseline. We fix the masking ratio at $0.75$ to change patch sizes. The patch size refers to the patch size in the original MAE, where the masking patch size and the patch size of ViT are equal.}
\label{tab:coco_patch} \vspace{-1em}
\end{table}
We use the pretrained MAE ViT encoder as an FPN \cite{lin2017feature} backbone in Mask-RCNN \cite{he2017mask}, following \cite{he2021masked}. To do so, \cite{he2021masked} uses a stack of pretrained transformer blocks in MAE to produce feature maps at a single scale; for instance, patch size $16$ will produce stride $16$ features. Then the features are equally divided, and upsampling or downsampling is applied to create features at different scales. Lastly, the FPN is built on multi-scale features. Below we include the transfer learning results of different patch sizes on COCO object detection and segmentation \cite{lin2014microsoft}. Because different patch sizes in ViT will influence the scale of feature maps in the FPN, we enforce the same combinations of multi-scale features: i.e., stride $4$, $8$, $16$, and $32$. 

From Table \ref{tab:coco_patch}, we show the transfer learning results of MAE under different patch sizes. Patch size $8$ performs the best, and patch size $16$ is better than $32$. The reason for the better performance at patch size $8$ may be due to a smaller batch size used, compared to patch size $16$ and $32$ (we can only fit batch size $1$ for patch size $8$ due to the increased number of tokens to process because of a smaller patch size.) We use the same batch size for $32$ and $16$, and the comparison between the two supports our claim: an extreme masking scheme can hurt the model's capacity to capture high-level information or, in this case, the semantic understanding of the scene.

\end{document}